\documentclass{article}

\usepackage{microtype}
\usepackage{graphicx}
\usepackage{subcaption}
\usepackage{booktabs} % for professional tables

\usepackage{hyperref}

\usepackage[preprint]{icml2026}

\usepackage{amsmath}
\usepackage{amssymb}
\usepackage{mathtools}
\usepackage{amsthm}

\usepackage[capitalize,noabbrev]{cleveref}

\theoremstyle{plain}
\newtheorem{theorem}{Theorem}[section]

\newtheorem{lemma}{Lemma}[section]

\theoremstyle{definition}
\newtheorem{definition}{Definition}[section]
\usepackage{amsthm}

\newtheoremstyle{compactstyle}
  {3pt}                % 上方间距
  {3pt}                % 下方间距
  {\itshape}           % 正文字体（通常假设用斜体）
  {}                   % 缩进量
  {\bfseries}          % 标题头字体
  {.}                  % 标题后标点
  {.5em}               % 标题后间距
  {}                   % 标题头说明

\theoremstyle{compactstyle}

\newtheorem{assumption}{Assumption}[section]
\theoremstyle{remark}

\usepackage[textsize=tiny]{todonotes}

\begin{document}

\twocolumn[
  \icmltitle{RepFlow: Representation Enhanced Flow Matching for Causal
Effect Estimation}

  % It is OKAY to include author information, even for blind submissions: the
  % style file will automatically remove it for you unless you've provided
  % the [accepted] option to the icml2026 package.

  % List of affiliations: The first argument should be a (short) identifier you
  % will use later to specify author affiliations Academic affiliations
  % should list Department, University, City, Region, Country Industry
  % affiliations should list Company, City, Region, Country

  % You can specify symbols, otherwise they are numbered in order. Ideally, you
  % should not use this facility. Affiliations will be numbered in order of
  % appearance and this is the preferred way.
  \icmlsetsymbol{equal}{*}
  \begin{icmlauthorlist}
  	\icmlauthor{Yifei Xie}{yyy}
  	\icmlauthor{Jian Huang}{yyy}
  
  \end{icmlauthorlist}
  
  \icmlaffiliation{yyy}{Department of Data Science and Artificial Intelligence, The
  	Hong Kong Polytechnic University, Hong Kong, China}

  \icmlcorrespondingauthor{Jian Huang}{j.huang@polyu.edu.hk}

  \icmlkeywords{Machine Learning, ICML}

  \vskip 0.3in
]

% this must go after the closing bracket ] following \twocolumn[ ...

% This command actually creates the footnote in the first column listing the
% affiliations and the copyright notice. The command takes one argument, which
% is text to display at the start of the footnote. The \icmlEqualContribution
% command is standard text for equal contribution. Remove it (just {}) if you
% do not need this facility.

% Use ONE of the following lines. DO NOT remove the command.
% If you have no special notice, KEEP empty braces:
\printAffiliationsAndNotice{}  % no special notice (required even if empty)
% Or, if applicable, use the standard equal contribution text:
% \printAffiliationsAndNotice{\icmlEqualContribution}

\begin{abstract}
 Estimating causal effects from observational data has become increasingly critical in diverse fields including healthcare, economics, and social policy. The fundamental challenge in  causal inference arises from the missing counterfactuals and the selection bias. Existing methods are largely limited to point estimates and lack the capacity for distribution modeling. In this work, we propose RepFlow, a novel framework that formulates causal effect estimation as a joint optimization problem integrating representation learning with Conditional Flow Matching (CFM).
 RepFlow mitigates selection bias by minimizing the entropically regularized Wasserstein distance between treated and control representations.
 To enhance numerical stability, we further introduce an $L_2$ normalization constraint on latent representations.
 This balanced representation enables the flow model to accurately capture the distribution of potential outcomes. Extensive experiments across a wide range of benchmarks demonstrate that RepFlow consistently outperforms existing methods in both point and distributional causal effect estimation.
\end{abstract}

\section{Introduction}

Estimating causal effects plays an essential role across diverse fields. While traditional predictive tasks typically identify statistical associations, causal inference seeks to answer counterfactual 'what-if' questions, providing critical guidance for effective decision-making \cite{pearl2009causality}.
For instance, in healthcare, clinicians need to determine the optimal treatment plan for a patient by assessing the predicted outcomes for different patient groups \cite{corrao2012external}. In social policy, governments endeavor to determine whether subsidized job-training programs actually improve workers' employment prospects \cite{britto2022effect}. Similarly, in digital marketing, platforms seek to predict whether an advertisement will successfully persuade a user to purchase a product \cite{rzepakowski2012decision}. 

The fundamental challenge of causal inference is that we can observe outcomes only for the treatment received, not for alternatives. Randomized controlled trials (RCTs) are widely regarded as the gold standard for identifying causal effects, as random treatment assignment ensures comparability between treated and control groups \cite{hernan2010causal}. In practice, however, RCTs  are often infeasible owing to ethical constraints and high costs. Consequently, the growing availability of large-scale observational data has further increased interest in causal inference from observational studies \cite{rosenbaum1983central}. In such settings, treatment assignment is typically non-random as confounding variables influence it. This leads to selection bias, also known as covariate shift, in which the distributions of measured features differ across treatment groups \cite{kirklin2010second}. For instance, in clinical practice, a newly approved medication is more often prescribed to younger patients, while older patients are more likely to remain on standard treatment. In this case, age becomes a confounder that renders the two groups incomparable. Unless these distribution shifts are adequately addressed, models will fail to provide accurate causal estimates.

To address selection bias in observational data, traditional causal inference research has developed a variety of techniques primarily focused on estimating the Average Treatment Effect (ATE). Approaches such as propensity score matching and weighting aim to balance covariate distributions \cite{lunceford2004stratification,imai2014covariate}. Doubly robust estimators combine outcome modeling with propensity scores for improving robustness against model misspecification \cite{chernozhukov_doubledebiased_2018}. However, these methods generally provide population-level estimates and overlook treatment effect heterogeneity.  Consequently, there has been a growing interest in estimating the Conditional Average Treatment Effect (CATE) to capture subgroup-specific effects. This includes various meta-learners, such as the S-learner, T-learner, and DR-learner, which split the CATE estimation into supervised learning sub-problems \cite{athey2015machine,kunzel2019metalearners,kennedy2023towards}, as well as tree-based approaches such as Causal Forests \cite{wager2018estimation}. Furthermore, deep representation learning methods have attracted considerable attention by learning balanced embeddings that minimize the distance between treatment and control distributions. These approaches consider a range of frameworks, such as using integral probability metrics (IPM) for regularization or harnessing optimal transport \cite{shalit2017estimating,hassanpour2019learning,cheng2022learning,wang2023optimal}. A notable shortcoming of the aforementioned methods is that they primarily focus on point estimates, which are unable to capture the full distribution of predicted outcomes. Learning the entire distribution is necessary for reliable decision-making, as it provides fundamental information about the uncertainty of treatment effects \cite{gische2022beyond,kennedy2023semiparametric}. 

To mitigate this constraint,  researchers have turned to generative modeling as a compelling technique for learning the distribution of predicted outcomes. Early work in this area leveraged Generative Adversarial Networks (GANs), such as GANITE, to synthesize potential outcomes (POs) \cite{yoon2018ganite}. However, GAN-based frameworks commonly suffer from training instability and mode collapse. In recent years, diffusion-based methods have become a popular choice for causal inference \cite{chao2023interventional,ma2024diffpo,chen2025enhancing}, offering a flexible way to model complex data distributions. However, the SDE-based sampling process in these models is inefficient, requiring many iterations and resulting in slow sampling speeds. The random characteristic of the process can also lead to numerical instability in the generated POs. To handle these issues, PO-Flow adopts a Flow Matching framework based on Ordinary Differential Equations (ODEs) \cite{wu2025po}. This ODE-based solution provides more efficient inference and generates more stable, deterministic sample paths. However, despite these improvements to the generative mechanism, PO-Flow does not address the fundamental issue of selection bias inherent to observational studies.

In this work, we present a representation enhanced flow matching framework, called RepFlow, that integrates balanced representation learning with generative causal modeling. At the core of our approach is a joint objective function that simultaneously optimizes a representation encoder and a conditional flow matching model. Specifically, we mitigate selection bias by minimizing the Wasserstein discrepancy between treatment groups in a latent space, which is computed efficiently via entropic regularization. To ensure computational stability and prevent gradient explosions during optimal transport optimization, we enforce L2 normalization in the representation layer to smoothly map latent features onto a unit hypersphere. This balanced foundation enables the conditional flow matching model to learn the full density of POs accurately. By using an ODE formulation to map noise to output distributions, RepFlow provides deterministic, interpretable sample paths that enable fast, accurate sampling during inference. This unified architecture ensures the generative process remains robust even under substantial covariate shift. Ultimately, our framework delivers a unified, flexible approach that enables fast, accurate sampling and reliable causal estimation across a wide range of complex settings.

Our main contributions are summarized as follows:
\begin{itemize}
	\item We propose RepFlow, a unified framework that integrates balanced representation learning with conditional flow matching to mitigate selection bias and model the full distribution of potential outcomes (POs).
	
	\item We design a stable joint objective by combining entropically regularized Wasserstein distance with $L_2$-normalized representations, effectively improving numerical stability and preventing gradient explosion during training.
	
	\item We provide a theoretical analysis establishing a distributional generalization bound, showing that the expected distributional error is controlled by the factual distribution losses and the Wasserstein distance between treatment groups in the latent space, which motivates our joint optimization strategy.

	\item Extensive experiments on IHDP, ACIC 2018, and synthetic datasets demonstrate that RepFlow consistently outperforms state-of-the-art baselines in both point and distributional causal effect estimation.
\end{itemize}
\section{Related Work}
\subsection{CATE Estimation}
Existing literature on CATE estimation generally falls into two categories: model-agnostic and model-specific approaches. Model-agnostic estimators decompose the causal task into standard supervised learning problems. These include one-step "plug-in" learners, such as the S-learner and T-learner, which estimate POs directly to compute their difference \cite{athey2015machine,kunzel2019metalearners}. However, these methods often fail to adequately address selection bias. In contrast, two-step learners, such as the X-learner and DR-learner, first utilize nuisance functions to construct pseudo-outcomes and subsequently regress these values onto input covariates to obtain the CATE \cite{he2022x,kennedy2023towards}.  These strategies commonly assume that the CATE function has a simpler functional structure than the underlying POs. In finite-sample settings, this assumption can introduce a specific inductive bias that is often too restrictive, potentially resulting in suboptimal performance in predicting individualized outcomes.

Model-specific estimators adapt specialized machine learning techniques directly for causal inference tasks. This category includes tree-based approaches, exemplified by Causal Forest, which modify random forest objectives to estimate heterogeneous treatment effects during tree building \cite{wager2018estimation}.  Neural network architectures like CFRNet and DragonNet have gained popularity \cite{shalit2017estimating,shi2019adapting}; these methods rely on representation learning to align covariate distributions between treatment and control groups, therefore alleviating selection bias \cite{shalit2017estimating,hassanpour2019learning,du2021adversarial,cheng2022learning,wang2023optimal}. Other approaches, such as CEVAE, incorporate latent-variable models to address hidden confounding while modeling outcome distributions \cite{louizos2017causal}. However, most of these approaches primarily target CATE estimation rather than predicting individualized POs. Since these models are typically restricted to point estimates, they lack the capacity to provide full distributional information regarding POs.

\subsection{Generative models for causal inference}
Generative models were introduced for learning complex data distributions from a given dataset, enabling the generation of high quality samples that reflect underlying data variance \cite{goodfellow2020generative,ho2020denoising,zhou2023deep,chao2023interventional,chen2025enhancing}. Building on this ability, recent studies have applied generative methods to causal inference related tasks \cite{SanchezLOT23}, including generating counterfactual images \cite{deng2024causal}, performing causal discovery, and answering causal queries \cite{Sánchez-Martin_Rateike_Valera_2022}. While these methods indicate the flexibility of generative modeling for causal reasoning, they are primarily designed for structural equation or image counterfactual generation, rather than for directly estimating causal effects from observational data.

Another line of work applies generative models to causal treatment effect estimation under the POs framework. GANITE introduces a two-stage adversarial architecture in which a counterfactual generator produces proxy outcomes to supervise an ITE generator \cite{yoon2018ganite}; however, this approach is prone to resulting in instability during adversarial training. Diffusion-based approaches have also been proposed for modeling POs \cite{ma2024diffpo}. Although DiffPO incorporates inverse propensity weighting (IPW) into the diffusion noise loss to address treatment imbalance, this design introduces substantial numerical bias and variance, leading to unstable training when propensity scores are near 0 or 1. In addition, the method incurs high computational cost due to the large number of sampling iterations required for POs generation. More recently, PO-Flow adopts flow matching to generate POs with deterministic dynamics and improved sampling efficiency. Despite its efficiency, PO-Flow does not explicitly mitigate selection bias, making it sensitive to distribution shifts between treated and control groups. To bridge these gaps, we propose RepFlow, which combines balanced representation learning with a flow matching framework, enabling unbiased causal effect estimation without reliance on IPW weighting.

\section{Preliminaries}
In this section, we establish the formal notation for causal effect estimation and introduce the fundamentals of Continuous Normalizing Flows and the Flow Matching training paradigm.
\begin{figure*}[ht]  % 跨双栏，优先页顶（ICML模板强制要求）
	\centering
	% 1. 增大子图宽度占比（从0.45→0.48，接近跨栏总宽度的一半）
	% 2. 降低子图间间隙（从0.05→0.02，减少留白）
	% 子图1：Training Phase（左）
	\begin{subfigure}{0.48\textwidth}  
		\centering
		% 关键：优先用width控制（适配\textwidth），height作为兜底且适当增大
		% keepaspectratio保证图片不变形
		\includegraphics[width=\linewidth, height=20cm, keepaspectratio]{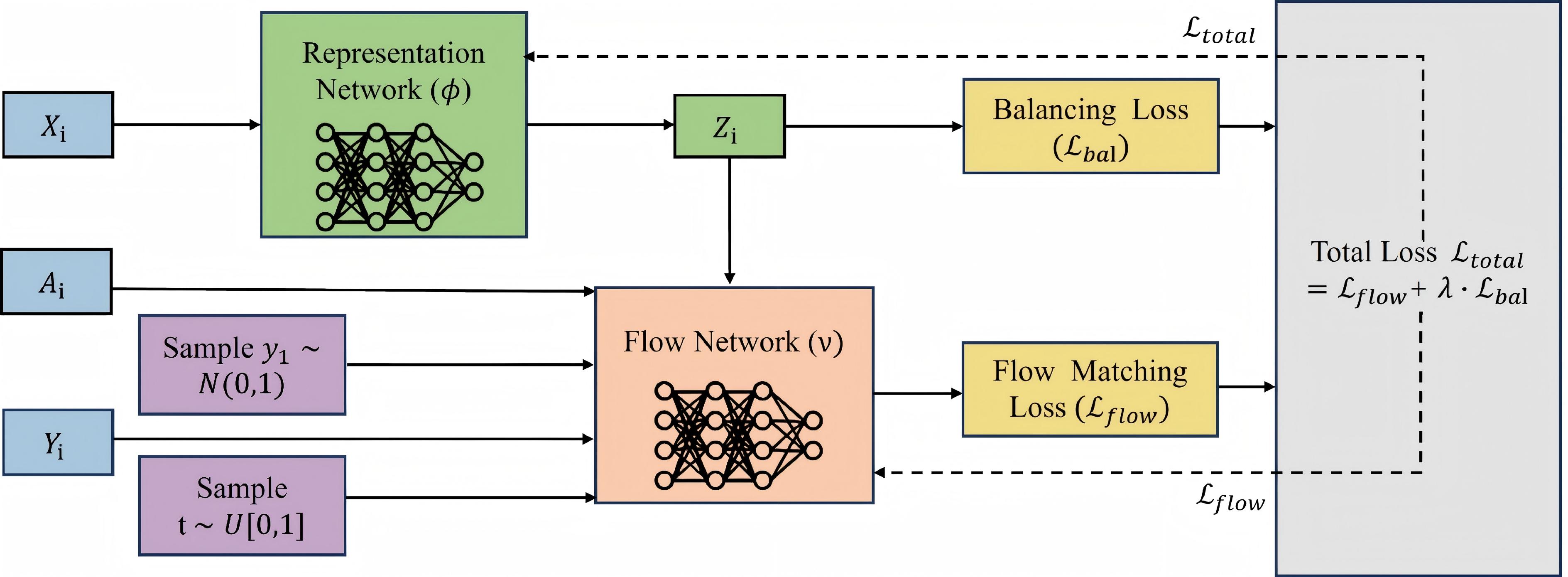}
		\caption{Training Phase}
		\label{Fig.sub.3}
	\end{subfigure}
	\hspace{0.02\textwidth}  % 缩小子图间距，释放宽度空间
	% 子图2：Inference Phase（右）
	\begin{subfigure}{0.48\textwidth}
		\centering
		% 保持与左图参数一致，确保对称
		\includegraphics[width=\linewidth, height=20cm, keepaspectratio]{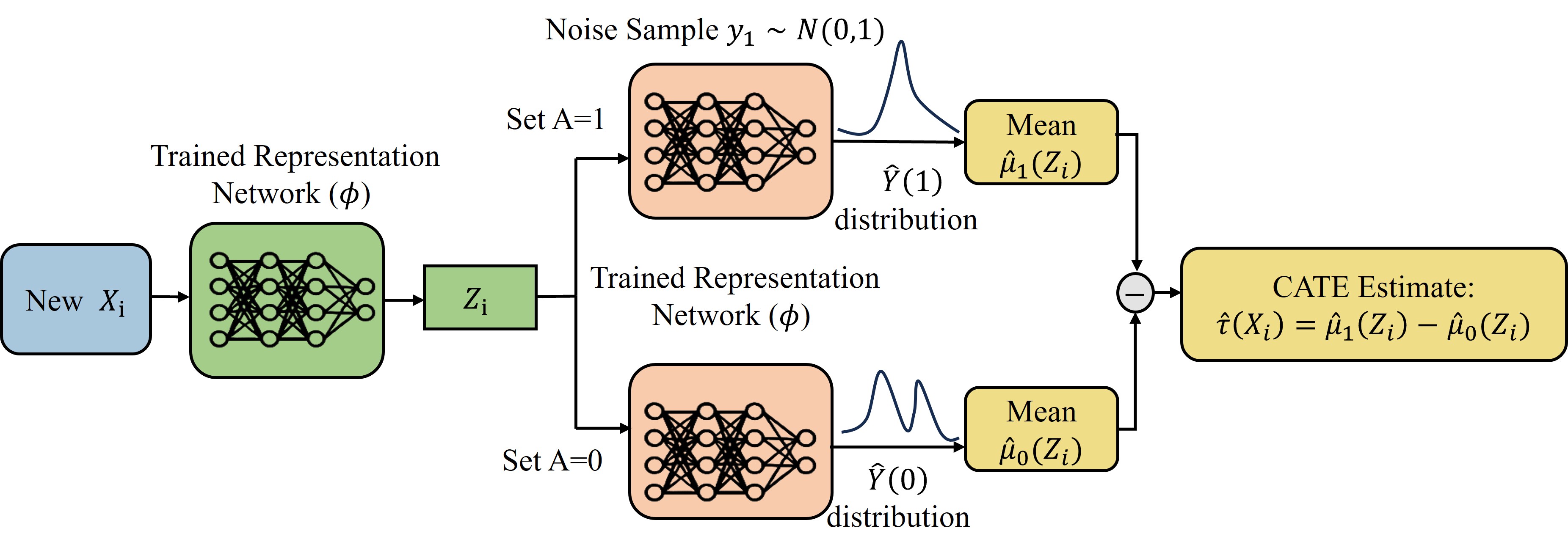}
		\caption{Inference Phase}
		\label{Fig.sub.4}
	\end{subfigure}
	
	\caption{Architecture of Repflow.}
	\label{fig1}
\end{figure*}
\subsection{Causal Inference and Potential Outcomes}
We consider an observational dataset $\mathcal{D} = \{(x_i, a_i, y_i)\}_{i=1}^n$ drawn from a distribution $p(X, A, Y)$, where $x \in \mathbb{R}^{d_X}$ denotes the observed covariates, $a \in \{0, 1\}$ is the binary treatment assignment, and $y \in \mathbb{R}^{d_Y}$ is the observed outcome. 

Following the potential outcomes framework \cite{rubin1974estimating}, let $Y^{(0)}$ and $Y^{(1)}$ represent the potential outcomes under control and treatment, respectively. The observed outcome is given by $Y = AY^{(1)} + (1-A)Y^{(0)}$. Our objective is to estimate the CATE, defined as $\tau(x) = \mathbb{E}[Y^{(1)} - Y^{(0)} \mid X=x]$. In order to identify  causal effects from observational data, we need to make the following standard assumption:

\begin{assumption}
\label{assump:core_causal}
We assume the standard causal identification conditions:
(i) \emph{Consistency}, i.e., if an individual receives treatment $A=a$, then
$Y = Y^{(a)}$;
(ii) \emph{Unconfoundedness}, i.e.,
$\{Y^{(0)}, Y^{(1)}\} \perp\!\!\!\perp A \mid X$;
and (iii) \emph{Overlap}, i.e.,
$0 < \mathbb{P}(A=1 \mid X=x) < 1$ for all $x \in \mathcal{X}$.
\end{assumption}

Under Assumption \ref{assump:core_causal}, the distribution of potential outcomes can be identifd from observational data as $p(Y^{(a)} \mid X) = p(Y \mid X, A=a)$. 
Consequently, the CATE denoted as $\tau$, can be identified through the following expansion of expectations:
\begin{align}
\tau &= \mathbb{E}[Y(1) \mid X] - \mathbb{E}[Y(0) \mid X] \nonumber \\
     &= \mathbb{E}[Y \mid A=1, X] - \mathbb{E}[Y \mid A=0, X]. \nonumber
\end{align}

\begin{definition}
	Let $\Phi: \mathcal{X} \rightarrow \mathcal{Z}$ be the representation network, and let $\hat{p}_{\theta}(Y|\phi(x), a)$ be the conditional distribution of potential outcomes generated by the Repflow model. For a unit $x \in \mathcal{X}$ and treatment assignment $a \in \{0, 1\}$, the  distributional distance is defined as:
	\begin{equation}
		d_{\theta, \phi}(x, a) = \mathcal{D} \left( p(Y|x,a), \hat{p}_{\theta}(Y|\phi(x), a) \right), \nonumber
	\end{equation}
	where $\mathcal{D}(\cdot, \cdot)$ is a metric on the space of probability distributions, such as the Wasserstein distance $\mathcal{W}_1$.
\end{definition}

\begin{definition} \label{zdf3}
	Conditioned on the treatment assignment, the factual distribution losses for each group are:
	\begin{align}
		& \epsilon_{F}^{a=1} = \int_{\mathcal{X}} d_{\theta, \phi}(x, 1) p(x|a=1) dx,  \nonumber \\
		& \epsilon_{F}^{a=0} = \int_{\mathcal{X}} d_{\theta, \phi}(x, 0) p(x|a=0) dx. \nonumber
	\end{align}
\end{definition}
\begin{definition}
	The expected distributional estimation error is defined as the total discrepancy between the true and estimated potential outcome distributions over the population:
	\begin{equation}
		\epsilon_{D}(\theta, \phi) = \int_{\mathcal{X}} \left( d_{\theta, \phi}(x, 1) + d_{\theta, \phi}(x, 0) \right) p(x) dx.
	\end{equation}
\end{definition}

While many existing works focus primarily on estimating the response function $\mu_a(x) = \mathbb{E}[Y^{(a)} \mid X=x]$ to compute the point estimate of $\tau$, our approach aims to model the full conditional distribution $p(Y \mid X, A=a)$.

\subsection{Continuous Normalizing Flows and Flow Matching}
\noindent\textbf{Continuous Normalizing Flows (CNFs).} CNFs are a class of generative models that transform a simple prior $p_1$ (e.g., standard Gaussian) into a complex data distribution $p_0$ via an invertible ODE-based mapping \cite{chen2018neural,gao2024convergence}. Unlike discrete normalizing flows \cite{rezende2015variational}, which stack finite invertible layers, CNFs model distributions through continuous trajectories, offering greater flexibility and avoiding the gradient instability and costly Jacobian computations associated with deep discrete flows. Formally, let $y_t \in \mathbb{R}^{d_Y}$ for $t\in[0,1]$ follow a trajectory governed by a learnable velocity field $v_\theta(\psi_t, t)$:
\begin{equation}
	\frac{dy_t}{dt} = v_\theta(\psi_t, t), \quad y_0 \sim p_0, \quad y_1 \sim p_1,
	\label{eq:ode_main}
\end{equation}
where $v_\theta$ is parameterized by a neural network.

\noindent\textbf{Flow Matching (FM).} FM trains a CNF by regressing the velocity field $v_\theta$ onto a target vector field $u_t(x)$ that induces a desired probability path $p_t(x)$:
\begin{equation}
	\mathcal{L}_{\text{FM}}(\theta) = \mathbb{E}_{t \sim \mathcal{U}[0,1], x \sim p_t(x)} \| v_\theta(x, t) - u_t(x) \|^2.
\end{equation}
However, naively optimizing $\mathcal{L}_{\text{FM}}$ is intractable in practice because the marginal vector field $u_t(x)$ and the path $p_t(x)$ generally lack a closed-form expression. 

To address this, we adopt Conditional Flow Matching (CFM), which decomposes the marginal vector field into sample-conditional vector fields $u_t(x|y_0)$ generating conditional paths $p_t(x|y_0)$. A key theoretical foundation of this approach is that the gradient of the CFM loss is equivalent to that of the original FM loss, $\nabla_\theta \mathcal{L}_{\text{FM}}(\theta) = \nabla_\theta \mathcal{L}_{\text{CFM}}(\theta)$, as formally established in Theorem 2 of \cite{LipmanCBNL23}.

We define a linear interpolant between an observed outcome $y_0 \sim p_0$ and noise $y_1 \sim N(0,I)$:
\begin{equation}
	\psi_t(y_0, y_1) = (1 - t)y_0 + (t + \sigma(1 - t))y_1, \nonumber
\end{equation}
\begin{equation}
	u_t(y_0, y_1) = \frac{d}{dt}\psi_t(y_0, y_1) = (1 - \sigma)y_1 - y_0, \nonumber
\end{equation}
where $\sigma > 0$ ensures $\psi_0 \sim N(y_0, \sigma^2 I)$. The CFM objective becomes
\begin{equation}
	\mathcal{L}_{\text{CFM}}(\theta) = \mathbb{E}_{t, y_0, y_1} \| v_\theta(\psi_t(y_0, y_1)) - u_t(y_0, y_1) \|^2.
\end{equation}

Optimizing this objective allows us to predict potential outcomes (POs). Unlike PO-Flow, we integrate balanced representations $\phi(X)$ to mitigate selection bias in observational data.

\section{Methodology} \label{sec:method}

In this section, We propose a unified framework that combines balanced representation learning with flow-based generative modeling for estimating POs. As shown in Figure \ref{fig1}, our framework comprises of two primary components: (1) a representation learning module that projects covariates into a balanced latent space to address selection bias, and (2) a conditional flow matching module that generates potential outcomes based on these balanced representations.

\begin{theorem}\label{theo}
	Let $\Phi: \mathcal{X} \rightarrow \mathcal{Z}$ be the representation network. Then, the expected distributional error over the population satisfies:
	\begin{equation}
		\epsilon_{D}(\theta, \phi) \leq \epsilon_{F}^{a=0} + \epsilon_{F}^{a=1} + B_{\Phi} \cdot \mathcal{W}\!\left(p_{\Phi}^{a=1}, p_{\Phi}^{a=0}\right),
	\end{equation}
	where $\mathcal{W}(\cdot,\cdot)$ denotes the Wasserstein distance between distributions and $B_{\Phi}$ is a constant.
\end{theorem}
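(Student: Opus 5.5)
We follow the standard CFRNet-style argument of Shalit et al., replacing the pointwise loss by the distributional distance $d_{\theta,\phi}$ and the IPM by the Wasserstein-1 distance via Kantorovich--Rubinstein duality.

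\emph{Step 1: decompose by treatment group.} Write $u = \mathbb{P}(A=1)$ and $p(x) = u\,p(x\mid a=1) + (1-u)\,p(x\mid a=0)$. Then
\begin{align}
\epsilon_D &= u\!\int d(x,1)p(x\mid 1)dx + (1-u)\!\int d(x,0)p(x\mid 0)dx \nonumber\\
&\quad + u\!\int d(x,0)p(x\mid 1)dx + (1-u)\!\int d(x,1)p(x\mid 0)dx. \nonumber
\end{align}
The first line is at most $\epsilon_F^{a=1}+\epsilon_F^{a=0}$, since $u,1-u\le 1$. The second line consists of the counterfactual losses $\epsilon_{CF}^{a=1} = \int d(x,0)p(x\mid 1)dx$ and $\epsilon_{CF}^{a=0} = \int d(x,1)p(x\mid 0)dx$. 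For each we write, for example,
\[
\int d(x,0)p(x\mid 1)dx = \epsilon_F^{a=0} + \int d(x,0)\bigl(p(x\mid 1)-p(x\mid 0)\bigr)dx.
\]
The weights then combine so that the factual terms carry total coefficient one each. Concretely, $u\,\epsilon_F^{1}+(1-u)\epsilon_F^{0}+u\,\epsilon_F^{0}+(1-u)\epsilon_F^{1} = \epsilon_F^{0}+\epsilon_F^{1}$. What remains is a difference term weighted by $u$ and one weighted by $1-u$, and these sum to at most one copy of the discrepancy bound.

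\emph{Step 2: push to representation space.} Assume $\Phi$ is invertible onto its image $\mathcal{Z}$ with Jacobian $J_\Psi$ of $\Psi=\Phi^{-1}$, as in Shalit et al. By change of variables, $\int d(x,a)\bigl(p(x\mid 1)-p(x\mid 0)\bigr)dx = \int g_a(z)\bigl(p_\Phi^{a=1}(z)-p_\Phi^{a=0}(z)\bigr)dz$ with $g_a(z)=d_{\theta,\phi}(\Psi(z),a)$. Bound this by $\sup_{g\in G}\bigl|\int g\,(p_\Phi^{1}-p_\Phi^{0})\bigr|$ for a class $G$ containing $g_a/B_\Phi$. Choosing $G$ to be the 1-Lipschitz functions on $\mathcal{Z}$, Kantorovich--Rubinstein duality gives exactly $B_\Phi\,\mathcal{W}_1(p_\Phi^{a=1},p_\Phi^{a=0})$. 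Here $B_\Phi$ is a uniform Lipschitz constant of $z\mapsto d_{\theta,\phi}(\Psi(z),a)$ for $a\in\{0,1\}$.

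\emph{Main obstacle: justifying the Lipschitz constant $B_\Phi$.} We need $z\mapsto \mathcal{W}_1\bigl(p(Y\mid\Psi(z),a),\hat p_\theta(Y\mid z,a)\bigr)$ to be Lipschitz. By the triangle inequality for $\mathcal{W}_1$, this holds under two conditions:
\begin{itemize}
\item the true conditional $x\mapsto p(Y\mid x,a)$ is Lipschitz in $\mathcal{W}_1$, composed with a Lipschitz $\Psi$ on the compact unit hypersphere $\mathcal{Z}$ (where the $L_2$ normalization helps);
\item the flow-generated conditional is Lipschitz in $z$, which follows by Grönwall's inequality if $v_\theta$ is Lipschitz in $(y,z)$, since ODE solutions then depend Lipschitz-continuously on the conditioning input.
\end{itemize}
I would state these regularity conditions explicitly as assumptions, set $B_\Phi$ to the resulting constant, and absorb the $u,1-u$ weights into it to conclude.
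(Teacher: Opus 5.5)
Your proposal is correct and follows essentially the paper's argument: the paper also splits $\epsilon_D$ using $p(x)=u\,p(x\mid a=1)+(1-u)\,p(x\mid a=0)$, rewrites the counterfactual pieces as factual losses plus difference integrals, and changes variables through an assumed inverse $\Psi$. It then simply \emph{posits} that $d_{\theta,\phi}(\Psi(z),a)/B_{\Phi}$ is $1$-Lipschitz, so that Kantorovich--Rubinstein duality turns the IPM into $\mathcal{W}$. One caution on your extra justification of $B_{\Phi}$, which goes beyond the paper: a Lipschitz $\Psi$ on the bounded unit hypersphere forces $\mathcal{X}=\Psi(\Phi(\mathcal{X}))$ to be bounded, so that sufficient condition cannot hold for $\mathcal{X}=\mathbb{R}^{d_X}$ (the $L_2$ normalization hurts rather than helps here), and you should instead assume directly that $z\mapsto p(Y\mid\Psi(z),a)$ is Lipschitz in $\mathcal{W}_1$.
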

\begin{proof}
	The detailed proof is provided in Appendix~\ref{proof}.
\end{proof}

Theorem \ref{theo} provides the theoretical foundation for our proposed joint loss function by establishing a upper bound on the expected distributional error. It formally demonstrates that simultaneously minimizing the  flow matching loss and the latent distribution discrepancy leads to a principled control of the overall estimation error. This result motivates our integrated optimization strategy and explains its robustness under selection bias.
\subsection{Balanced Representation Learning}
In observational studies, treatment assignments are typically non-random, which introduces selection bias. For example, in clinical settings, patients exhibiting more severe symptoms are more likely to receive aggressive treatments. This scenario represents a distributional shift in covariates, expressed as $p(X \mid A = 0) \neq p(X \mid A = 1)$. Neglecting this shift can result in substantial bias and increased variance in estimates of POs, particularly in regions with low overlap.

%Representation learning for causal inference aims to learn a mapping $\phi: \mathcal{X} \to \mathcal{Z}$ that transforms covariates $X$ into a balanced representation space $\mathcal{Z}$. In particular, we employ a multi-layer feed-forward neural network $\phi$ so that the resulting representation $Z = \phi(X)$ is invariant with respect to treatment assignment $A$. The empirical distributions of representations for the control and treatment groups are denoted as $\hat{P}_\phi^{a=0} = \{z_i^0\}_{i=1}^{n}$ and $\hat{P}_\phi^{a=1} = \{z_j^1\}_{j=1}^{m}$, respectively.

To mitigate selection bias, we minimize the discrepancy between these distributions using the Wasserstein distance as the metric.  The empirical distributions of representations for the control and treatment groups are denoted as $\hat{P}_\phi^{a=0} = \{z_i^0\}_{i=1}^{n}$ and $\hat{P}_\phi^{a=1} = \{z_j^1\}_{j=1}^{m}$, respectively.
While previous approaches often used the maximum mean discrepancy (MMD) or relied on dual-form Wasserstein approximations that impose complex Lipschitz constraints \cite{shalit2017estimating,hassanpour2019learning,kong2023covariate,liu2025wasserstein}, we adopt the optimal transport (OT) framework \cite{villani2008optimal}. Specifically, the Wasserstein distance is defined as the minimum cost of transporting mass between distributions \cite{kantorovich2006translocation}:
\begin{equation}\label{bal_loss}
\mathcal{L}_{bal}=\mathcal{W}(\hat{P}_\phi^{a=0}, \hat{P}_\phi^{a=1}) = \min_{\pi \in \Pi(\hat{P}_\phi^{a=0}, \hat{P}_\phi^{a=1})} \langle H, \pi \rangle,
\end{equation}
where $H \in \mathbb{R}_{+}^{n \times m}$ is the cost matrix with $H_{ij} = \|z_i^0 - z_j^1\|_2$ representing the Euclidean distance between representations, and $\Pi$ denotes the set of all feasible transport plans. Since solving the Kantorovich problem is computationally
intensive, we introduce entropic regularization to enable
efficient training. This regularization allows us to compute
the regularized transport plan via the sinkhorn algorithm \cite{distances2013lightspeed}.

A key technical refinement in our framework is the enforcement of $L_2$ normalization at the output layer of the representation network $\phi$, ensuring $z = \phi(x) / \|\phi(x)\|_2$. This strategy constrains all learned representations to lie on a unit hypersphere, yielding several critical advantages. This improves numerical stability by bounding the cost matrix $H$, preventing extreme values in Sinkhorn iterations and gradient explosion. It also ensures fully differentiable objectives with stable gradients, enabling more reliable and efficient updates This improves numerical stability by bounding the cost matrix $H$, preventing extreme values in Sinkhorn iterations and gradient explosion. It also ensures fully differentiable objectives with stable gradients, enabling more reliable and efficient updates to $\phi$.

By minimizing $\mathcal{L}_{\text{bal}}$, we ensure that the learned representations $Z$ are balanced across treatment groups, thereby mitigating selection bias and providing a robust foundation for the subsequent Flow Matching process.

\subsection{Conditional Flow Matching for Potential Outcomes}To model the  conditional distribution of POs, we utilize the balanced representation $Z$ as the conditioning variable within the CFM framework. A major challenge in balanced representation learning is ensuring that the mapping $\phi$ does not discard essential information for predicting $Y$ while minimizing the distribution discrepancy. In our framework, we address this by jointly optimizing the representation network and the velocity field.

To this end, we define a velocity field $v_\theta(\psi_t, t, z, a)$ conditioned on the learned representation $Z$ and the treatment assignment $a$. The objective of the flow matching task is to minimize the following loss:
\begin{equation}
\label{flow_loss}
\mathcal{L}_{\text{flow}}(\theta) = \mathbb{E}_{t, y_0, y_1,x,a} || v_\theta(\psi_t, t, \phi(x), a) - u_t(y_0,y_1) ||^2,
\end{equation}
where $u_t$ is the conditional vector field transporting the base distribution to the outcome distribution.

To achieve a representation that is both balanced across treatment groups and highly informative for outcome estimation, we define the total objective function as a weighted sum of the balancing loss and the flow matching loss:
\begin{equation}\mathcal{L}_{\text{total}} = \mathcal{L}_{\text{flow}}(\theta) + \lambda \mathcal{L}_{\text{bal}}(\phi),\end{equation}where $\lambda > 0$ is a hyperparameter governing the trade-off between balance and predictive power. Under this joint optimization target , the representation network $\phi$ receives gradients from both $\mathcal{L}_{\text{bal}}$ and $\mathcal{L}_{\text{flow}}$. While $\mathcal{L}_{\text{bal}}$  encourages $Z$ to be treatment invariant, $\mathcal{L}_{\text{flow}}$ acts as a predictive constraint that prevents $Z$ from collapsing into an uninformative state. By extracting outcome relevant information, $\phi(x)$ establishes a robust conditioning manifold for the velocity field. The training of RepFlow involves the joint optimization of the representation encoder $\phi$ and the velocity network $v_\theta$ by minimizing $\mathcal{L}_{\text{total}}$.
The detailed training procedure is provided in Algorithm \ref{alg:repflow_train}.
% This enables the flow model to  precisely describe the conditional distribution of POs, supporting accurate estimation of individual treatment effects.

\noindent\textbf{POs Generation and Causal Effect Estimation.} In the RepFlow framework, the estimation of POs is formulated as a continuous trajectory integration problem. Once the conditioned velocity field $v_\theta$ is trained, it defines a deterministic probability path that maps standard Gaussian noise to the outcome distribution. This enables precise PO estimation by leveraging the learned dynamics.

Specifically, for a given individual $x$ and treatment assignment $a \in \{0, 1\}$, we first compute the representation $z = \phi(x)$. To generate a potential outcome $\hat{y}^{(a)}$, we initialize a latent noise $y_1 \sim \mathcal{N}(0, I)$ at $t=1$. The potential outcome is then obtained by solving the following reverse-time ODE:
\begin{equation}
\hat{y}^{(a)} = y_1 - \int_{0}^{1} v_\theta(\psi_t, t; z, a)  dt.
\label{eq:reverse_integral}
\end{equation}
In our implementation, this continuous-time integral is solved using the Runge-Kutta method \cite{butcher1996history} to ensure numerical stability and precision trajectory tracking from the latent space  to the outcome space.

Based on the identification results in Assumption \ref{assump:core_causal}, the causal effect is estimated by comparing the expected potential outcomes.
We draw $M$ independent samples $\{y_{1}^{(m)}\}_{m=1}^M$ from the prior distribution $\mathcal{N}(0, I)$ and generate the corresponding synthetic outcomes $\{y^{(a,m)}\}_{m=1}^M$ using the reverse-time process defined in Eq.~\eqref{eq:reverse_integral}. The response functions are then empirically estimated as
\begin{equation}
\hat{\mu}_a(z) = \frac{1}{M} \sum_{m=1}^{M} y^{(a,m)}, \quad a \in \{0,1\}.
\end{equation}
The individual treatment effect  for an individual with covariates $x$ is subsequently estimated as
\begin{equation}
\hat{\tau}(x) = \hat{\mu}_1(\phi(x)) - \hat{\mu}_0(\phi(x)).
\end{equation}

By leveraging the set of generated samples $\{y^{(a, m)}\}_{m=1}^M$, our approach enables the construction of an empirical estimation of the conditional distribution $p(Y \mid X=x, A=a)$. Such distributional knowledge is vital for accounting for uncertainty in practical applications; it not only informs how likely a specific outcome is but also provides the probability that a potential outcome lies within a desired range. By moving beyond point estimation, our method allows for a more comprehensive assessment of the reliability of the estimates, providing a more principled basis for informed decision-making.

% --- Algorithms ---
\begin{algorithm}[tb]
	\caption{RepFlow: Representation-Enhanced Flow Matching Training}
	\label{alg:repflow_train}
	\begin{algorithmic}[1]
		\REQUIRE Observational data $\mathcal{D}$, Representation network $\phi$ with initial parameters $\theta_\phi$, Velocity network $v$ initial parameters $\theta_v$.
		\WHILE{not converged}
		\STATE Sample mini-batch $\mathcal{B} = \{(x_i, a_i, y_i)\}_{i=1}^m$ from $\mathcal{D}$.
	
		\STATE Compute representations: $Z = \phi(X)$.
		\STATE Compute balancing loss $\mathcal{L}_{bal}$ using Eq.~\eqref{bal_loss}.
		\STATE Sample $t$, noise $y_1$, and construct interpolant $\psi_t$.
		\STATE Compute target velocity $u_t$.
		\STATE Predict velocity $\hat{v} = v(\psi_t, t; Z, A)$.
		\STATE Compute flow matching loss using Eq.~\eqref{flow_loss}.
	
		\STATE $\theta_v \leftarrow \theta_v - \eta \nabla_{\theta_v} \mathcal{L}_{flow}$
		\STATE $\mathcal{L}_{total} = \mathcal{L}_{flow} + \lambda \mathcal{L}_{bal}$
		\STATE $\theta_\phi \leftarrow \theta_\phi - \eta \nabla_{\theta_\phi} \mathcal{L}_{total}$
		\ENDWHILE
		\STATE {\bfseries Output:} Trained networks $\phi$ and $v$.
	\end{algorithmic}
\end{algorithm}

\begin{table}[ht]
	\centering
	\caption{In-sample and out-of-sample performance (mean ± standard error) of $\epsilon_{\text{PEHE}}$ and $\epsilon_{\text{ATE}}$ on the IHDP dataset. Results are averaged over 100 repetitions.}
	\label{tab:ihdp_results}
	\setlength{\tabcolsep}{3pt}  % 默认 6pt，压缩列间距
	\begin{tabular}{l@{\hspace{4pt}}cccc}
		\hline
		Method &
		$\sqrt{\epsilon_{\text{PEHE}}^{\text{in}}}$ &
		$\epsilon_{\text{ATE}}^{\text{in}}$ &
		$\sqrt{\epsilon_{\text{PEHE}}^{\text{out}}}$ &
		$\epsilon_{\text{ATE}}^{\text{out}}$ \\
		\hline
		S-learner &1.10$\pm$.06 & .21$\pm$.08 &1.17$\pm$.10 &.24$\pm$.11\\
		T-learner &0.72$\pm$.05 &.10$\pm$.06 &0.75$\pm$.07 & .10$\pm$.07 \\
		DR-learner &0.74$\pm$.17 & .17$\pm$.10 & 0.72$\pm$.18 & .20$\pm$.10\\
		CF     & 3.80$\pm$.20 & .18$\pm$.01 & 3.80$\pm$.20 & .40$\pm$.03 \\
		CEVAE  & 2.70$\pm$.10 & .34$\pm$.01 & 2.60$\pm$.10 & .46$\pm$.02 \\
		CFRW & 0.71$\pm$.02 & .25$\pm$.01 & 0.76$\pm$.02 & .27$\pm$.01\\
		GANITE &1.90$\pm$.40 & .24$\pm$.09 & 2.40$\pm$.40  &.28$\pm$.10\\
		DiffPO &0.79$\pm$.16  &.16$\pm$.07 &0.84$\pm$.18 &.19$\pm$.08\\
		POflow &0.41$\pm$.08 &\textbf{.07$\pm$.05} &0.45$\pm$.10 &\textbf{.09$\pm$.07}\\
		\textbf{Repflow (Ours)} &\textbf{0.31$\pm$.05} &\textbf{.08$\pm$.05} &\textbf{0.33$\pm$.09} &\textbf{.08$\pm$.06}\\
		\hline
	\end{tabular}
\end{table}
\section{Experiments}
%\begin{table}[ht]
%	\centering
%	\caption{In-sample and out-of-sample performance (mean ± standard error) of $\epsilon_{\text{PEHE}}$ and $\epsilon_{\text{ATE}}$ on the IHDP dataset. Results are averaged over 100 repetitions.}
%	\label{tab:ihdp_results}
%	\setlength{\tabcolsep}{3pt}  % 默认 6pt，压缩列间距
%	\begin{tabular}{l@{\hspace{4pt}}cccc}
%		\hline
%		Method &
%		$\sqrt{\epsilon_{\text{PEHE}}^{\text{in}}}$ &
%		$\epsilon_{\text{ATE}}^{\text{in}}$ &
%		$\sqrt{\epsilon_{\text{PEHE}}^{\text{out}}}$ &
%		$\epsilon_{\text{ATE}}^{\text{out}}$ \\
%		\hline
%		S-learner &1.10$\pm$.06 & .21$\pm$.08 &1.17$\pm$.10 &.24$\pm$.11\\
%		T-learner &0.72$\pm$.05 &.10$\pm$.06 &0.75$\pm$.07 & .10$\pm$.07 \\
%		DR-learner &0.74$\pm$.17 & .17$\pm$.10 & 0.72$\pm$.18 & .20$\pm$.10\\
%		CF     & 3.80$\pm$.20 & .18$\pm$.01 & 3.80$\pm$.20 & .40$\pm$.03 \\
%		CEVAE  & 2.70$\pm$.10 & .34$\pm$.01 & 2.60$\pm$.10 & .46$\pm$.02 \\
%		CFRW & 0.71$\pm$.02 & .25$\pm$.01 & 0.76$\pm$.02 & .27$\pm$.01\\
%		GANITE &1.90$\pm$.40 & .24$\pm$.09 & 2.40$\pm$.40  &.28$\pm$.10\\
%		DiffPO &0.79$\pm$.16  &.16$\pm$.07 &0.84$\pm$.18 &.19$\pm$.08\\
%		POflow &0.41$\pm$.08 &\textbf{.07$\pm$.05} &0.45$\pm$.10 &\textbf{.09$\pm$.07}\\
%		\textbf{Repflow (Ours)} &\textbf{0.31$\pm$.05} &\textbf{.08$\pm$.05} &\textbf{0.33$\pm$.09} &\textbf{.08$\pm$.06}\\
%		\hline
%	\end{tabular}
%\end{table}
In this section, we conduct extensive experiments to evaluate the performance of our proposed RepFlow model. We aim to demonstrate that by combining representation enhancement with flow matching, our method achieves superior precision in causal effect estimation across diverse benchmark datasets.

\subsection{Experimental Setup}
\begin{table*}[t] % 使用 table* 环境实现跨双栏，[t] 通常是跨栏表格的最佳位置
	\centering
	\caption{Results for benchmarking causal effect estimation on ACIC 2018 (24 datasets) for both in \& out of sample, respectively. Reported: \% of runs with the Top 1 and 3 performances.}
	\label{tab:cate_benchmark}
	\vskip 0.15in % 遵循 ICML 规范，在标题和表格间增加一点间距
	\begin{small} % 跨双栏表格建议使用 small 或常用字号，避免 resizebox 导致字号不统一
		\resizebox{\textwidth}{!}{% % 将 \linewidth 改为 \textwidth
			\begin{tabular}{lcccccccc}
				\hline
				Method & TOP1$\sqrt{\epsilon_{\text{PEHE}}^{\text{in}}}$\% & TOP3$\sqrt{\epsilon_{\text{PEHE}}^{\text{in}}}$\% & TOP1$\epsilon_{\text{ATE}}^{\text{in}}$\% & TOP3$\epsilon_{\text{ATE}}^{\text{in}}$\% & TOP1$\sqrt{\epsilon_{\text{PEHE}}^{\text{out}}}$\%
				& TOP3$\sqrt{\epsilon_{\text{PEHE}}^{\text{out}}}\%$& TOP1$\epsilon_{\text{ATE}}^{\text{out}}$\%
				& TOP3$\epsilon_{\text{ATE}}^{\text{out}}$\%
				\\
				\hline
				
				S-learner &0.00 & 4.17 &0.00  &4.17 & 0.00&4.17&0.00&4.17\\
				T-learner &0.00 &8.33 &0.00 & 8.33 &0.00 & 4.17 &0.00  &4.17 \\
				DR-learner &8.33 & 33.33 & 0.00 & 12.50 &4.17&25.00&0.00&16.67\\
				
				CF     & 0.00 & 8.33 & 12.50 &37.50& 0.00&4.17&0.00&4.17 \\
				CEVAE  &0.00 &20.83 & 0.00 & 16.67&0.00 &16.67 & 4.17&8.33 \\
				CFRW & 12.50 & 33.33& 4.17 & 37.50&25.00&29.17&4.17&37.50\\
				GANITE &8.33&58.33 & 8.33  &33.33&8.33&66.67&8.33&45.83\\
				DiffPO &20.83 &50.00&25.00 &58.33&25.00&62.50&20.83&66.67\\
				POflow &12.50 &16.67 &8.33&29.17&4.17&12.50&4.17&20.83\\
				\textbf{Repflow(Ours)} &\textbf{37.50} &\textbf{66.67} &\textbf{41.67} &\textbf{62.50}&\textbf{33.33}&\textbf{79.17}&\textbf{58.33}&\textbf{87.50}\\
				
				\hline
			\end{tabular}
		}
	\end{small}
	% \vskip -0.1in % 稍微收缩下方间距，节省空间
\end{table*}
\noindent\textbf{Datasets.} Following previous studies \cite{shalit2017estimating,ma2024diffpo,wu2025po}, we evaluate our model on two publicly available causal inference benchmarks and compare it with other generative models for distribution estimation on synthetic data. The data generation settings are detailed in Appendix~\ref{data_setting}.

\begin{itemize} \item \textbf{IHDP Dataset:} This is a semi-synthetic dataset based on  the Infant Health and Development Program, where the covariates are obtained by a randomized experiment assessing the impact of home visits by specialist doctors on the cognitive test scores for premature infants \cite{hill2011bayesian}. The dataset comprises 747 units (including 139 treated and 608 control) and 25 pre-treatment covariates measuring various aspects of the children and their mothers. It is widely used to  evaluate a model's accuracy in small sample scenario.

\item \textbf{ACIC 2018 Dataset:} The ACIC 2018 dataset is a collection of semi-synthetic datasets released for the 2018 Atlantic Causal Inference Conference Challenge. It is built upon the Linked Birth and Infant Death Data, featuring 177 dimensional medical covariates, and simulates treatment assignments and potential outcomes across 24 distinct generative mechanisms \cite{dorie2019automated}. Sample sizes range from 1,000 to 50,000, testing the scalability and robustness of estimators. \end{itemize}

\noindent\textbf{Baselines.}
We compare our method with representative approaches from four categories:
(i) model-agnostic learners, including S-learner \cite{athey2015machine}, T-learner \cite{kunzel2019metalearners}, and DR-learner \cite{kennedy2023towards};
(ii) model-specific methods, including Causal Forest (CF) \cite{wager2018estimation} and CEVAE \cite{louizos2017causal};
(iii) representation-based models, including CFRW \cite{shalit2017estimating};
and (iv) deep generative models, including GANITE \cite{yoon2018ganite}, DiffPO \cite{ma2024diffpo}, and POflow \cite{wu2025po}. Implementation details of all methods are provided in Appendix~\ref{imp}.

%\noindent\textbf{Baselines.} To comprehensively evaluate performance, our approach is compared with several state-of-the-art methods, including model-agnostic learners, model-specific approaches, representation-based methods, and deep generative models.
%
%\begin{itemize}\item S-learner \cite{athey2015machine}: S-learner treats the treatment assignment as an additional input feature within a single regression model to estimate potential outcomes.
%\item T-learner \cite{kunzel2019metalearners}:  T-learner utilizes two separate regressors for each treatment group.
%\item DR-learner \cite{kennedy2023towards}: DR-learner integrates outcome modeling and propensity scores through a two-stage process to estimate the CATE.
%\item CF \cite{wager2018estimation}: CF extends the random forest algorithm to partition the covariate space for direct treatment effect estimation.
%\item CFRW \cite{shalit2017estimating}: CFRW is a representation-based conditional regression framework for estimating CATE.
%\item CEVAE \cite{louizos2017causal}: CEVAE employs variational autoencoders to recover latent confounders and estimate potential outcomes.
%\item GANITE \cite{yoon2018ganite}: GANITE utilizes generative adversarial networks to generate potential outcomes. 
%\item DiffPO \cite{ma2024diffpo}: DiffPO leverages diffusion probabilistic models to estimate the conditional distribution of potential outcomes.
%\item POflow \cite{wu2025po}: POflow applies continuous normalizing flows to model potential outcome distributions.
%\end{itemize}

\noindent\textbf{Performance Metrics and Settings.} We evaluate causal effect estimation using three standard metrics, where lower values indicate better performance:  

\begin{itemize}
\item \textbf{Precision in the Estimation of Heterogeneous Effects ($\epsilon_{\text{PEHE}}$):} Measures the root mean squared error in estimating the CATE: $\sqrt{\epsilon_{\text{PEHE}}} = \sqrt{\frac{1}{N} \sum_{i=1}^N ( \hat{\tau}(x_i) - \tau(x_i) )^2}$.

\item \textbf{ATE Error ($\epsilon_{\text{ATE}}$):} Measures the absolute error in estimating the ATE: $\epsilon_{\text{ATE}} = |\tau - \hat{\tau}|$.
\end{itemize}

\begin{itemize}
    \item \textbf{Empirical Wasserstein Distance ($W_1$):} Measures the discrepancy between the true and estimated distributions of potential outcomes, defined as:
    \begin{equation}
        W_1(p_1, p_2) = \frac{1}{m} \sum_{i=1}^m \| \hat{y}^{(i)} - y^{(i)} \|,\nonumber
    \end{equation}
\end{itemize}
 where $p_1 = p_{\theta}(Y|X,A)$ and $p_2 = p(Y|X,A)$ denote the estimated and true conditional distributions of POs, respectively.
\begin{figure}[t]
	\centering
	% 使用 \columnwidth 确保图片不会超出单栏边界
	\includegraphics[width=\columnwidth]{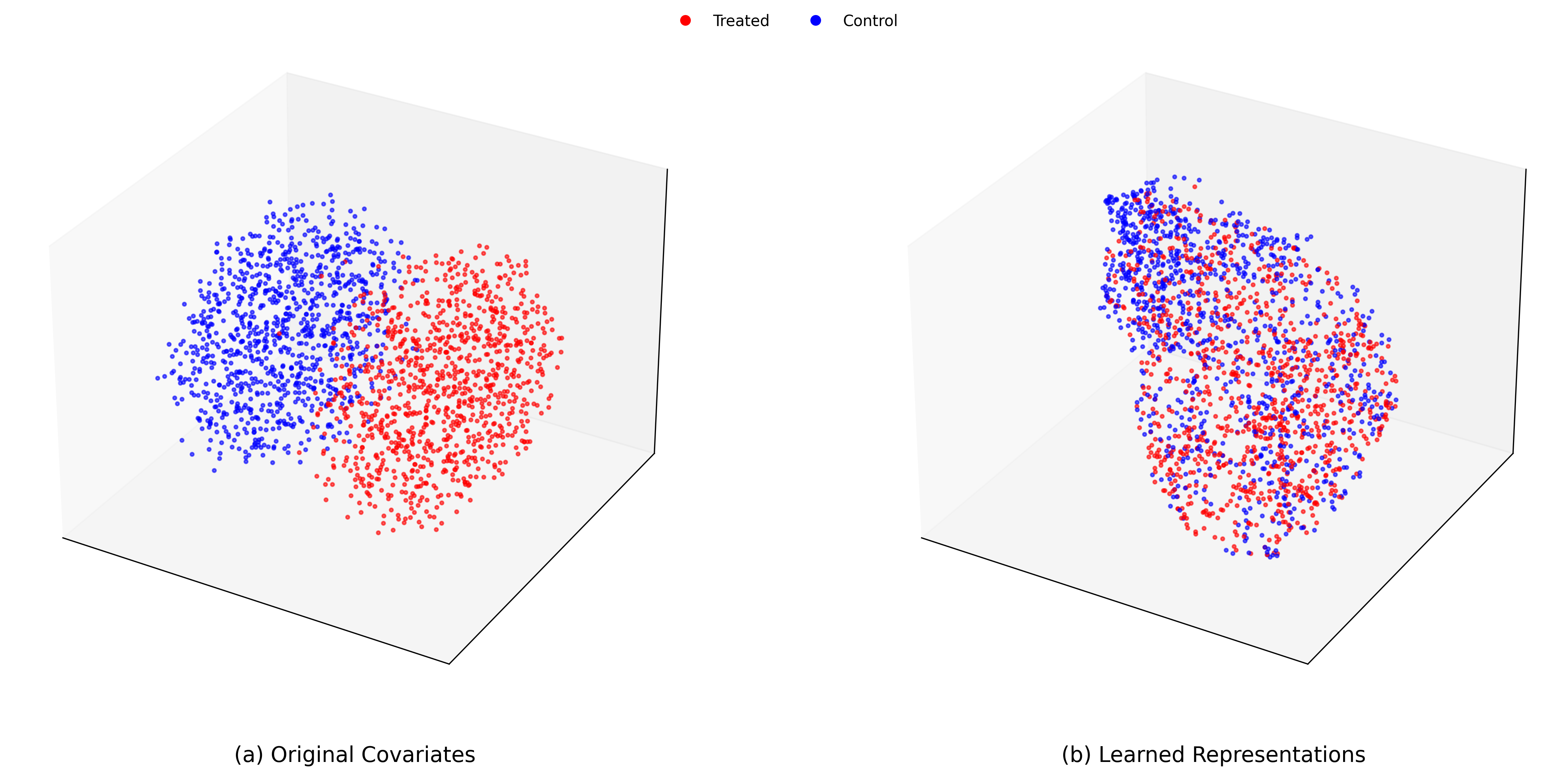} 
	\caption{Illustrations of dimensionality reduction results via UMAP are shown: the left panel displays original covariates, and the right panel shows learned representations. Red points indicate treated samples, while blue points correspond to control samples.}
	\label{fig:umap_comparison}
\end{figure}
\begin{table}[htbp]
  \centering
  \caption{Results showing in \& out-of-sample empirical Wasserstein distance (i.e., $\hat{W}^1_{\text{in}}$ and $\hat{W}^1_{\text{out}}$) for potential outcomes on the synthetic datasets.}
  \resizebox{\linewidth}{!}{% 自适应宽度（可根据需求调整）
  \begin{tabular}{l|cc|cc}
    \toprule
    & \multicolumn{2}{c|}{Setting A} & \multicolumn{2}{c}{Setting B} \\
    & $\hat{W}^1_{\text{in}}$ & $\hat{W}^1_{\text{out}}$ & $\hat{W}^1_{\text{in}}$ & $\hat{W}^1_{\text{out}}$ \\
    \midrule
  
    GANITE & $1.27 \pm 0.09$ & $1.30 \pm 0.08$ & $1.29 \pm 0.23$ & $1.32 \pm 0.22$ \\
    DiffPO & $1.15 \pm 0.15$ & $1.17 \pm 0.11$ & $1.25 \pm 0.26$ & $1.35 \pm 0.23$ \\
    POflow & $1.08 \pm 0.06$ & $1.14 \pm 0.05$ & $1.26 \pm 0.16$ & $1.28 \pm 0.15$ \\
    \midrule
    \textbf{Repflow (ours)} & $\mathbf{1.00 \pm 0.15}$ & $\mathbf{1.03 \pm 0.12}$ & $\mathbf{1.11 \pm 0.12}$ & $\mathbf{1.13 \pm 0.14}$ \\
    \bottomrule
   
  \end{tabular}
  }
  \label{tab:synthetic_wasserstein}
\end{table}

\subsection{Results and Analysis} 
Table\ref{tab:ihdp_results} displays the performance on the IHDP dataset. Traditional learners and existing generative baselines demonstrate limited precision in estimating potential outcomes, whereas RepFlow consistently achieves the lowest estimation error in both within-sample and out-of-sample settings. For the more complex ACIC 2018 datasets, Table \ref{tab:cate_benchmark} indicates that RepFlow maintains superior stability across diverse experimental scenarios and outperforms state-of-the-art methods in the majority of runs.
Table \ref{tab:synthetic_wasserstein} reports empirical Wasserstein distances on synthetic datasets. Across both Setting A and Setting B, RepFlow attains the smallest in-sample ($\hat{W}^1_{\text{in}}$) and out-of-sample ($\hat{W}^1_{\text{out}}$) distances, indicating highly accurate modeling of PO distributions.

To visualize the effect of representation network in \text{RepFlow} for mitigating selection bias, we employ Uniform Manifold Approximation and Projection (UMAP) \cite{mcinnes2018umap} to project the data into a 3D space. As illustrated in Figure~\ref{fig:umap_comparison}(a), the original covariates of the IHDP dataset show a distinct distributional shift between the treated and control groups. In contrast, Figure~\ref{fig:umap_comparison}(b) shows that our \text{RepFlow} successfully maps these features into a balanced representation space where the two distributions are significantly overlapped. 

\begin{table}[htbp]
  \centering
  \caption{Ablation Study on the IHDP Dataset.}
  \label{tab:abl_ihdp_results}
  % 1. 降低列与列之间的间距（默认通常是 6pt，这里设为 2pt）
  \setlength{\tabcolsep}{2.5pt} 
  % 2. 稍微缩小字号
  \begin{small} 
  % 3. 最后的保底手段：强制缩放到单栏宽度
  \resizebox{\columnwidth}{!}{%
  \begin{tabular}{lcccc}
    \hline
    Method & $\sqrt{\epsilon_{\text{PEHE}}^{\text{in}}}$ & $\epsilon_{\text{ATE}}^{\text{in}}$ & $\sqrt{\epsilon_{\text{PEHE}}^{\text{out}}}$ & $\epsilon_{\text{ATE}}^{\text{out}}$\\
    \hline
    Repflow w/o rep & 0.47$\pm$.06 & .09$\pm$.08 & 0.50$\pm$.08 & .11$\pm$.08 \\
    Repflow with $\lambda=0$ & 0.40$\pm$.06 & .09$\pm$.07 & 0.41$\pm$.10 & .10$\pm$.07 \\
    Repflow with mmd &0.34$\pm$.05 &.08$\pm$.06 &0.36$\pm$.10 &.09$\pm$.07\\
    Repflow & 0.31$\pm$.05 & .08$\pm$.05 & 0.33$\pm$.09 & .08$\pm$.06 \\
    \hline
  \end{tabular}
  }
  \end{small}
\end{table}

\subsection{Ablation Study}

To assess the contribution of each component in RepFlow, we conduct ablation experiments on the IHDP dataset by considering the following variants:
(i) RepFlow w/o rep, which removes the representation learning module;
(ii) RepFlow with $\lambda=0$, which disables the distribution regularization in flow matching;
and (iii) RepFlow with MMD, which replaces the wasserstein distance with an MMD-based distribution alignment.
The full model is denoted as RepFlow.

The results are summarized in Table~\ref{tab:abl_ihdp_results}.
We observe that removing the representation module leads to a clear degradation in both in-sample and out-of-sample performance, highlighting the importance of learning balanced and informative representations.
Disabling the distribution regularization by setting $\lambda=0$ also results in inferior performance, especially in out-of-sample PEHE, indicating the necessity of enforcing distributional consistency.

Overall, the full RepFlow achieves the best performance, confirming that all components contribute to accurate and robust causal effect estimation.

\subsection{Sensitivity Analysis}
\begin{figure}[t]
	\centering
	\begin{subfigure}{0.48\columnwidth}
		\centering
		\includegraphics[width=\linewidth]{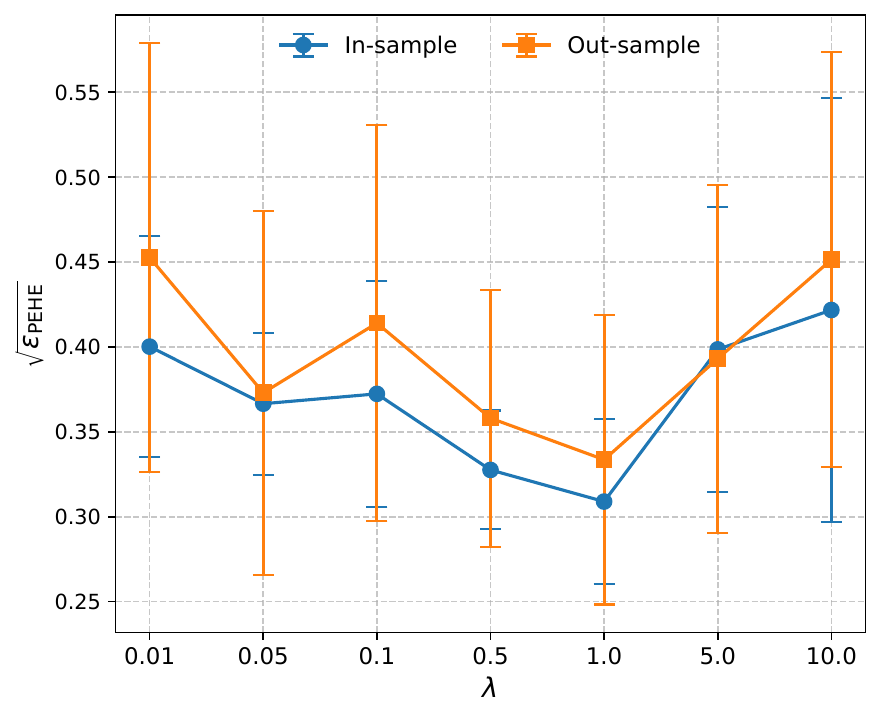}
		%\caption{PEHE vs. $\lambda$}
	\end{subfigure}
	\hfill
	\begin{subfigure}{0.48\columnwidth}
		\centering
		\includegraphics[width=\linewidth]{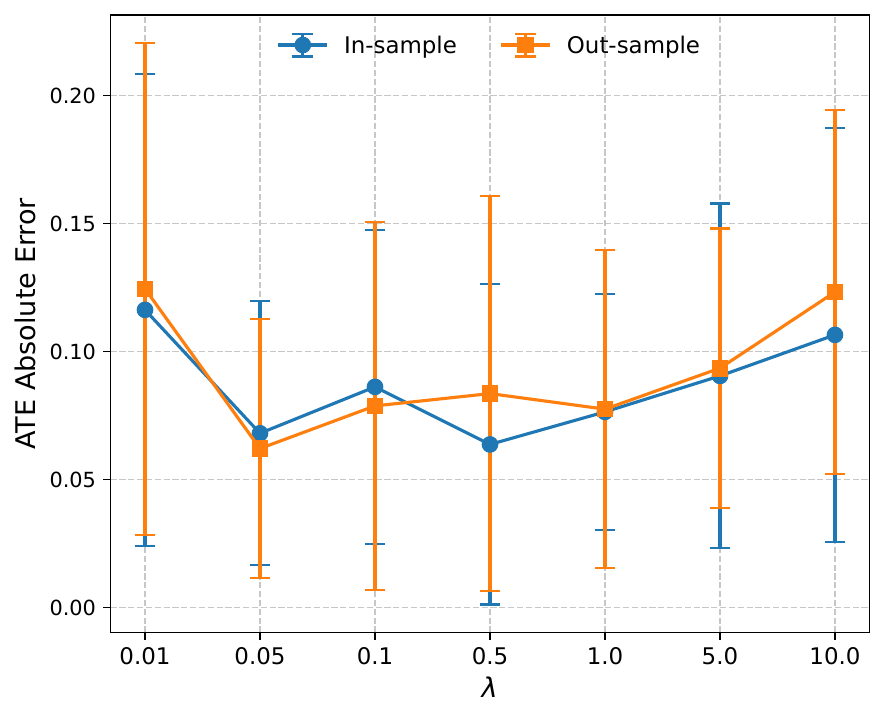}
		%\caption{ATE error vs. $\lambda$}
	\end{subfigure}
	
	\vspace{0.5em}
	
	\begin{subfigure}{0.48\columnwidth}
		\centering
		\includegraphics[width=\linewidth]{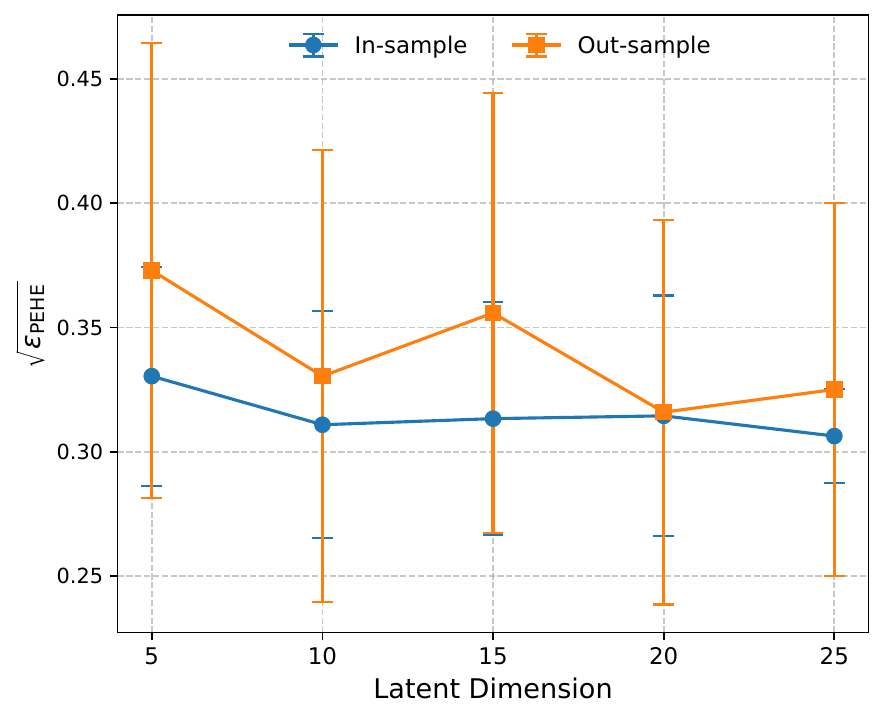}
		%\caption{PEHE vs. latent dim}
	\end{subfigure}
	\hfill
	\begin{subfigure}{0.48\columnwidth}
		\centering
		\includegraphics[width=\linewidth]{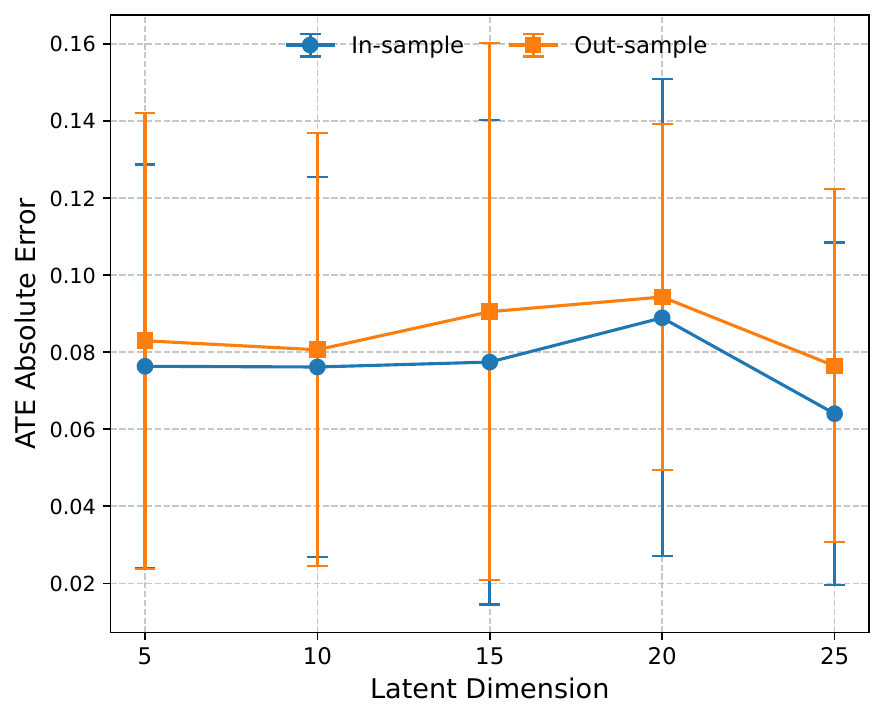}
		%\caption{ATE error vs. latent dim}
	\end{subfigure}
	\caption{Results of varying values of parameters on IHDP dataset.}
	\label{fig:parameters}
\end{figure}
We explore the sensitivity of the hyper-parameters on the IHDP dataset and the results are shown in Figure \ref{fig:parameters}.

When $\lambda$ is too small, insufficient regularization leads to poor generalization, while overly large $\lambda$ restricts model flexibility and degrades performance. Moderate values of $\lambda$  yield the best in-sample and out-of-sample performance, balancing fitting accuracy and distributional consistency.

Performance remains stable across a wide range of latent dimensions, with only minor fluctuations in PEHE and ATE error, indicating RepFlow is not overly sensitive to this hyperparameter.

Overall, RepFlow demonstrates robustness to hyperparameter variations, with stable performance across settings and optimal results under moderate distribution regularization.

\section{Conclusion}

We presented RepFlow, a framework that integrates representation learning with conditional flow matching to estimate potential outcome distributions. By minimizing the latent Wasserstein distance and enforcing $L_{2}$ normalization, RepFlow effectively mitigates selection bias while ensuring numerical stability. Furthermore, we provide a rigorous distributional generalization bound that establishes a principled theoretical foundation for our joint optimization strategy. Comprehensive evaluations across synthetic, IHDP, and ACIC 2018 datasets demonstrate that RepFlow consistently outperforms state-of-the-art baselines.

A limitation of our current framework is the reliance on the unconfoundedness assumption, which presumes that treatment assignment is conditionally independent of potential outcomes given the observed covariates.
Our future work focuses on extending this framework to more general settings with unmeasured confounding, where causal effect identifiability requires the introduction of additional structural assumptions. We explore extensions based on instrumental variable (IV) \cite{hartford2017deep} and proximal causal inference methods \cite{cui2024semiparametric} to support causal estimation under unmeasured confounding.

\section*{Impact Statement}

This research bridges representation learning and generative flows to enhance the reliability of causal effect estimation from observational data. By effectively mitigating selection bias and capturing  potential outcome distributions, our method could be applied to a wide range of applications, such as decision-making in healthcare, public policy, business.

% In the unusual situation where you want a paper to appear in the
% references without citing it in the main text, use \nocite
\nocite{langley00}

\bibliography{example_paper}
\bibliographystyle{icml2026}

%%%%%%%%%%%%%%%%%%%%%%%%%%%%%%%%%%%%%%%%%%%%%%%%%%%%%%%%%%%%%%%%%%%%%%%%%%%%%%%
%%%%%%%%%%%%%%%%%%%%%%%%%%%%%%%%%%%%%%%%%%%%%%%%%%%%%%%%%%%%%%%%%%%%%%%%%%%%%%%
% APPENDIX
%%%%%%%%%%%%%%%%%%%%%%%%%%%%%%%%%%%%%%%%%%%%%%%%%%%%%%%%%%%%%%%%%%%%%%%%%%%%%%%
%%%%%%%%%%%%%%%%%%%%%%%%%%%%%%%%%%%%%%%%%%%%%%%%%%%%%%%%%%%%%%%%%%%%%%%%%%%%%%%
\newpage
\appendix
\onecolumn

\section{Proof}\label{proof}
\begin{definition}
	Let $\Phi: \mathcal{X} \rightarrow \mathcal{Z}$ be the representation network, and let $\hat{p}_{\theta}(Y|\phi(x), a)$ be the conditional distribution of potential outcomes generated by the Repflow model. For a unit $x \in \mathcal{X}$ and treatment assignment $a \in \{0, 1\}$, the  distributional distance is defined as:
	\begin{equation}
		d_{\theta, \phi}(x, a) = \mathcal{D} \left( p(Y|x,a), \hat{p}_{\theta}(Y|\phi(x), a) \right), \nonumber
	\end{equation}
	where $\mathcal{D}(\cdot, \cdot)$ is a metric on the space of probability distributions, such as the Wasserstein distance $\mathcal{W}_1$.
\end{definition}

\begin{assumption}
	The representation function $\Phi: \mathcal{X} \rightarrow \mathcal{Z}$ is one-to-one. We define $\Psi: \mathcal{Z} \rightarrow \mathcal{X}$ to be the inverse of $\Phi$, such that $\Psi(\Phi(x)) = x$ for all $x \in \mathcal{X}$ . This assumption ensures that the transformation from the covariate space to the latent space preserves all necessary information for causal identification and allows for the change of variables in the error bound derivation.
\end{assumption}

% --- Definition 2: Expected Distributional Losses ---
\begin{definition}
	The expected factual and counterfactual distributional losses for the model $(\theta, \phi)$ are defined respectively as:
	\begin{equation}
		\epsilon_{F}(\theta, \phi) = \int_{\mathcal{X} \times \{0,1\}} d_{\theta, \phi}(x, a) p(x, a) dx da
	\end{equation}
	\begin{equation}
		\epsilon_{CF}(\theta, \phi) = \int_{\mathcal{X} \times \{0,1\}} d_{\theta, \phi}(x, a) p(x, 1-a) dx da
	\end{equation}
\end{definition}

% --- Definition 3: Group-specific Distributional Losses ---
\begin{definition} \label{df3}
	Conditioned on the treatment assignment, the factual and counterfactual losses for each group are:
	\begin{align}
		\text{Factual: } & \epsilon_{F}^{a=1} = \int_{\mathcal{X}} d_{\theta, \phi}(x, 1) p(x|a=1) dx, \quad \epsilon_{F}^{a=0} = \int_{\mathcal{X}} d_{\theta, \phi}(x, 0) p(x|a=0) dx \\
		\text{Counterfactual: } & \epsilon_{CF}^{a=1} = \int_{\mathcal{X}} d_{\theta, \phi}(x, 1) p(x|a=0) dx, \quad \epsilon_{CF}^{a=0} = \int_{\mathcal{X}} d_{\theta, \phi}(x, 0) p(x|a=1) dx
	\end{align}
	where $p^{a=1}(x)$ and $p^{a=0}(x)$ denote the treated and control distributions respectively.
\end{definition}

% --- Definition 4: Distributional PEHE ---
\begin{definition}
	The expected estimation of distributional is defined as the total error in estimating the potential outcome distributions across the entire population:
	\begin{equation}
		\epsilon_{D}(\theta, \phi) = \int_{\mathcal{X}} \left( d_{\theta, \phi}(x, 1) + d_{\theta, \phi}(x, 0) \right) p(x) dx
	\end{equation}
\end{definition}

\begin{definition}\label{dfipm}
	Let $G$ be a family of real-valued functions $g: \mathcal{Z} \rightarrow \mathbb{R}$ defined over the representation space $\mathcal{Z}$. For a pair of probability distributions $p, q$ over $\mathcal{Z}$, the Integral Probability Metric (IPM) is defined as:
	\begin{equation}
		\mathrm{IPM}_G(p, q) = \sup_{g \in G} \left| \int_{\mathcal{Z}} g(z) p(z) dz - \int_{\mathcal{Z}} g(z) q(z) dz \right|
	\end{equation}
	The $\mathrm{IPM}_G(\cdot, \cdot)$ defines a pseudo-metric on the space of probability functions over $\mathcal{Z}$. For sufficiently large function families, it becomes a proper metric. Specifically:
	\begin{itemize}
		\item If $G$ is the set of 1-Lipschitz functions, $\mathrm{IPM}_G$ corresponds to the \textbf{Wasserstein distance $W_{1}$}, used in RepFlow to mitigate selection bias.
		\item If $G$ is the set of unit norm functions in a universal Reproducing Kernel Hilbert Space (RKHS), $\mathrm{IPM}_G$ corresponds to the \textbf{Maximum Mean Discrepancy (MMD)}.
	\end{itemize}

\end{definition}

\begin{lemma}\label{decom}
	Let $u := p(a=1)$ be the proportion of treated units in the population. The expected factual and counterfactual distributional losses satisfy:
	\begin{align}
		\epsilon_{F}(\theta, \phi) &= u \cdot \epsilon_{F}^{a=1}(\theta, \phi) + (1-u) \cdot \epsilon_{F}^{a=0}(\theta, \phi) \\
		\epsilon_{CF}(\theta, \phi) &= (1-u) \cdot \epsilon_{CF}^{a=1}(\theta, \phi) + u \cdot \epsilon_{CF}^{a=0}(\theta, \phi)
	\end{align}
\end{lemma}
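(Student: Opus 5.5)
The plan is to expand the joint density $p(x,a)$ by conditioning on the treatment and then use the group-specific losses of Definition~\ref{df3}. Since $a\in\{0,1\}$ is binary, the integral over $\mathcal{X}\times\{0,1\}$ is really a sum over $a$ of integrals over $\mathcal{X}$, with $da$ the counting measure. Each integrand factors as $p(x,a)=p(a)\,p(x\mid a)$, and we have $p(a=1)=u$ and $p(a=0)=1-u$.

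For the factual identity, we write
\begin{align*}
\epsilon_F(\theta,\phi) &= \int_{\mathcal{X}} d_{\theta,\phi}(x,1)\,p(x,1)\,dx + \int_{\mathcal{X}} d_{\theta,\phi}(x,0)\,p(x,0)\,dx \\
&= u\int_{\mathcal{X}} d_{\theta,\phi}(x,1)\,p(x\mid a=1)\,dx + (1-u)\int_{\mathcal{X}} d_{\theta,\phi}(x,0)\,p(x\mid a=0)\,dx .
\end{align*}
By Definition~\ref{df3}, the two integrals are exactly $\epsilon_F^{a=1}$ and $\epsilon_F^{a=0}$.

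The counterfactual identity follows the same way, except that the weight is $p(x,1-a)$. For $a=1$ we get $p(x,0)=(1-u)\,p(x\mid a=0)$, and the integral becomes $(1-u)\,\epsilon_{CF}^{a=1}$. For $a=0$ we get $p(x,1)=u\,p(x\mid a=1)$, which gives $u\,\epsilon_{CF}^{a=0}$. Summing the two terms yields the stated formula.

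There is no real obstacle here. The only points to check are that the $da$ integral is interpreted as a sum over the two treatment values, and that $u\in(0,1)$, which follows from the overlap condition in Assumption~\ref{assump:core_causal}, so both conditional densities are well-defined. Nonnegativity and measurability of $d_{\theta,\phi}$, which holds because it is a metric, justify splitting the integrals even if some of them are infinite.
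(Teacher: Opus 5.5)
Your proof is correct and follows the paper's argument essentially verbatim. You factor $p(x,a)=p(a)\,p(x\mid a)$, read the $da$ integral as a sum over the two treatment values, and identify the group-specific losses of Definition~\ref{df3}, with the flipped weight $p(x,1-a)$ producing the $(1-u)$ and $u$ coefficients in the counterfactual case.
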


\begin{proof}
	By the law of total expectation, we can decompose the joint distribution $p(x, a)$ as $p(a)p(x|a)$. Noting that $p(a=1) = u$ and $p(a=0) = 1-u$, the factual loss becomes:
	\begin{align*}
		\epsilon_{F} &= \int_{\mathcal{X} \times \{0,1\}} d_{\theta, \phi}(x, a) p(x, a) dx da \\
		&= p(a=1) \int_{\mathcal{X}} d_{\theta, \phi}(x, 1) p(x|a=1) dx + p(a=0) \int_{\mathcal{X}} d_{\theta, \phi}(x, 0) p(x|a=0) dx \\
		&= u \cdot \epsilon_{F}^{a=1} + (1-u) \cdot \epsilon_{F}^{a=0}
	\end{align*}
	Similarly, for the counterfactual loss, we evaluate the distance under the flipped treatment assignment $1-a$:
	\begin{align*}
		\epsilon_{CF} &= \int_{\mathcal{X} \times \{0,1\}} d_{\theta, \phi}(x, a) p(x, 1-a) dx da \\
		&= p(a=0) \int_{\mathcal{X}} d_{\theta, \phi}(x, 1) p(x|a=0) dx + p(a=1) \int_{\mathcal{X}} d_{\theta, \phi}(x, 0) p(x|a=1) dx \\
		&= (1-u) \cdot \epsilon_{CF}^{a=1} + u \cdot \epsilon_{CF}^{a=0}
	\end{align*}
	which completes the proof.
\end{proof}

\begin{lemma}\label{cfdb} 
	Let $\Phi: \mathcal{X} \rightarrow \mathcal{Z}$ be an invertible representation with $\Psi$ as its inverse. Let $p_{\Phi}^{a=1}, p_{\Phi}^{a=0}$ be the distributions induced by $\Phi$ over $\mathcal{Z}$ . Let $u = p(a=1)$ and $G$ be a family of functions $g: \mathcal{Z} \rightarrow \mathbb{R}$. Denote by $\mathrm{IPM}_G(\cdot, \cdot)$ the integral probability metric induced by $G$. 
	
	Assume there exists a constant $B_{\Phi} > 0$ such that for $a \in \{0, 1\}$, the normalized distribution distance functions $g_{\theta, \phi}(z, a) := \frac{1}{B_{\Phi}} d_{\theta, \phi}(\Psi(z), a)$ belong to $G$. Then we have:
	\begin{equation}
		\epsilon_{CF}(\theta, \phi) \leq (1-u)\epsilon_{F}^{a=1}(\theta, \phi) + u\epsilon_{F}^{a=0}(\theta, \phi) + B_{\Phi} \cdot \mathrm{IPM}_G(p_{\Phi}^{a=1}, p_{\Phi}^{a=0})
	\end{equation}
\end{lemma}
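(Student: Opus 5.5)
We follow the standard argument of Shalit et al.\ for the counterfactual bound, adapted to distributional distances. The plan is to start from the decomposition in Lemma~\ref{decom}, namely $\epsilon_{CF} = (1-u)\epsilon_{CF}^{a=1} + u\,\epsilon_{CF}^{a=0}$, and to compare each counterfactual term with the corresponding factual term. Concretely, we add and subtract:
\begin{align*}
\epsilon_{CF} - \bigl[(1-u)\epsilon_F^{a=1} + u\,\epsilon_F^{a=0}\bigr]
&= (1-u)\int_{\mathcal{X}} d_{\theta,\phi}(x,1)\bigl(p(x|a=0)-p(x|a=1)\bigr)dx \\
&\quad + u\int_{\mathcal{X}} d_{\theta,\phi}(x,0)\bigl(p(x|a=1)-p(x|a=0)\bigr)dx .
\end{align*}
Both integrals are differences of expectations of a fixed function of $x$ under the two covariate distributions.

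The next step is to move each integral to the representation space. By the one-to-one assumption on $\Phi$, write $x=\Psi(z)$ and push forward $p(x|a)$ to $p_\Phi^{a}(z)$. The expectation of $d_{\theta,\phi}(x,a')$ under $p(\cdot|a)$ then equals the expectation of $d_{\theta,\phi}(\Psi(z),a')$ under $p_\Phi^{a}$. This is just the change-of-variables formula for pushforward measures, so no Jacobian bookkeeping is needed if it is phrased in terms of measures rather than densities. Each integral then becomes $B_\Phi\bigl(\int g_{\theta,\phi}(z,a')\,p_\Phi^{a}(z)dz - \int g_{\theta,\phi}(z,a')\,p_\Phi^{1-a}(z)dz\bigr)$.

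By hypothesis $g_{\theta,\phi}(\cdot,a')\in G$, so each absolute difference is at most $\mathrm{IPM}_G(p_\Phi^{a=1},p_\Phi^{a=0})$; here we use the symmetry of the absolute value in Definition~\ref{dfipm} to handle the sign flip in the second term. Combining, the excess is at most $(1-u)B_\Phi\,\mathrm{IPM}_G + u\,B_\Phi\,\mathrm{IPM}_G = B_\Phi\,\mathrm{IPM}_G(p_\Phi^{a=1},p_\Phi^{a=0})$, which gives the claim.

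The only delicate point is the change of variables: we must check that the pushforward identity holds under invertibility, and that $g$ is measurable and integrable, which follows from $g\in G$. The rest is linear bookkeeping.
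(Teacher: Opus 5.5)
Your proposal is correct and follows the paper's proof essentially step for step: the same add-and-subtract using Lemma~\ref{decom}, the change of variables to $\mathcal{Z}$ via $\Psi$, bounding each signed integral by $B_{\Phi}\,\mathrm{IPM}_G(p_{\Phi}^{a=1},p_{\Phi}^{a=0})$ because $g_{\theta,\phi}(\cdot,a)\in G$, and collapsing $(1-u)+u=1$. Your pushforward-measure phrasing of the change of variables is slightly cleaner than the paper's density notation, since it needs only $\Psi\circ\Phi=\mathrm{id}$ and does not assume densities on $\mathcal{Z}$ (awkward when representations lie on the unit hypersphere), but the argument is the same.
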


\begin{proof}
	Consider the discrepancy between the total counterfactual distributional loss and a weighted combination of group-specific factual distributional losses. Let:
	\begin{equation}
		\Delta = \epsilon_{CF}(\theta, \phi) - \left[ (1-u)\epsilon_{F}^{a=1}(\theta, \phi) + u\epsilon_{F}^{a=0}(\theta, \phi) \right] \notag
	\end{equation}
	By the decomposition properties of counterfactual and factual losses defined in Lemma \ref{decom}, we have:
	\begin{align}
		\Delta &= \left[ (1-u)\epsilon_{CF}^{a=1} + u\epsilon_{CF}^{a=0} \right] - \left[ (1-u)\epsilon_{F}^{a=1} + u\epsilon_{F}^{a=0} \right] = (1-u) \left[ \epsilon_{CF}^{a=1} - \epsilon_{F}^{a=1} \right] + u \left[ \epsilon_{CF}^{a=0} - \epsilon_{F}^{a=0} \right] \label{eq:repflow_diff_a}
	\end{align}
	Using the integral definitions \ref{df3} of factual and counterfactual losses, we express Eq. \ref{eq:repflow_diff_a} as integrals over the covariate space $\mathcal{X}$. Performing a change of variables to the representation space $\mathcal{Z}$ via the invertible mapping $\Phi$ and its inverse $\Psi$:
	\begin{align}
		\Delta &= (1-u) \int_{\mathcal{Z}} d_{\theta, \phi}(\Psi(z), 1) \left( p_{\Phi}^{a=0}(z) - p_{\Phi}^{a=1}(z) \right) dz + u \int_{\mathcal{Z}} d_{\theta, \phi}(\Psi(z), 0) \left( p_{\Phi}^{a=1}(z) - p_{\Phi}^{a=0}(z) \right) dz \label{eq:z_integral_a}
	\end{align}
	Based on the premise that there exists a constant $B_{\Phi} > 0$ such that the normalized distributional distance function $g_{\theta, \phi}(z, a) := \frac{1}{B_{\Phi}} d_{\theta, \phi}(\Psi(z), a)$ belongs to the function family $G$, the integrals are rewritten as:
	\begin{align}
		\Delta &= B_{\Phi} (1-u) \int_{\mathcal{Z}} \frac{1}{B_{\Phi}} d_{\theta, \phi}(\Psi(z), 1) \left( p_{\Phi}^{a=0}(z) - p_{\Phi}^{a=1}(z) \right) dz  + B_{\Phi} u \int_{\mathcal{Z}} \frac{1}{B_{\Phi}} d_{\theta, \phi}(\Psi(z), 0) \left( p_{\Phi}^{a=1}(z) - p_{\Phi}^{a=0}(z) \right) dz
	\end{align}
	By applying the definition \ref{dfipm} of the Integral Probability Metric (IPM) for family $G$ , where $|\int_{\mathcal{Z}} g(z) ( p_{\Phi}^{a=1} - p_{\Phi}^{a=0} ) dz| \leq \mathrm{IPM}_G(p_{\Phi}^{a=1}, p_{\Phi}^{a=0})$, and noting that $(1-u) + u = 1$:
	\begin{align}
		\Delta 
		&\leq B_{\Phi} (1-u) \left| \int_{\mathcal{Z}} g(z, 1) \bigl( p_{\Phi}^{a=0}(z) - p_{\Phi}^{a=1}(z) \bigr) \, dz \right| 
		+ B_{\Phi} u \left| \int_{\mathcal{Z}} g(z, 0) \bigl( p_{\Phi}^{a=1}(z) - p_{\Phi}^{a=0}(z) \bigr) \, dz \right| \nonumber \\
		&= B_{\Phi} (1-u) \left| \int_{\mathcal{Z}} g(z, 1) \bigl( p_{\Phi}^{a=1}(z) - p_{\Phi}^{a=0}(z) \bigr) \, dz \right| 
		+ B_{\Phi} u \left| \int_{\mathcal{Z}} g(z, 0) \bigl( p_{\Phi}^{a=1}(z) - p_{\Phi}^{a=0}(z) \bigr) \, dz \right| \nonumber \\
		&\leq B_{\Phi} (1-u) \cdot \mathrm{IPM}_{G}\bigl(p_{\Phi}^{a=1}, p_{\Phi}^{a=0}\bigr) 
		+ B_{\Phi} u \cdot \mathrm{IPM}_{G}\bigl(p_{\Phi}^{a=1}, p_{\Phi}^{a=0}\bigr) \nonumber \\
		&= B_{\Phi} \bigl[ (1-u) + u \bigr] \cdot \mathrm{IPM}_{G}\bigl(p_{\Phi}^{a=1}, p_{\Phi}^{a=0}\bigr) \nonumber \\
		&= B_{\Phi} \cdot \mathrm{IPM}_{G}\bigl(p_{\Phi}^{a=1}, p_{\Phi}^{a=0}\bigr).
	\end{align}
	Finally, rearranging the terms yields the distributional generalization bound:
	\begin{equation}
		\epsilon_{CF} \leq (1-u)\epsilon_{F}^{a=1} + u\epsilon_{F}^{a=0} + B_{\Phi} \cdot \mathrm{IPM}_{G}(p_{\Phi}^{a=1}, p_{\Phi}^{a=0}) \notag
	\end{equation}
	This confirms that the counterfactual distributional error is bounded by the factual errors and the distribution discrepancy in the latent space.
\end{proof}

\begin{lemma}\label{trans}
	Consider two distribution functions $P_1(x)$ and $P_2(x)$ supported over $\mathcal{X}$; let $G$ be the family of 1-Lipschitz functions, $\mathcal{W}$ be the Wasserstein distance, Villani \cite{villani2008optimal} demonstrate
\begin{equation}
	\mathrm{IPM}_{G}(P_1, P_2) = \mathcal{W}(P_1, P_2).
\end{equation}
\end{lemma}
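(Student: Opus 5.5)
The statement is the Kantorovich--Rubinstein duality, so I would prove the two inequalities $\mathrm{IPM}_G(P_1,P_2)\le \mathcal{W}(P_1,P_2)$ and $\mathcal{W}(P_1,P_2)\le \mathrm{IPM}_G(P_1,P_2)$ separately. Here $\mathcal{W}$ is the primal optimal-transport distance with cost $c(x,y)=\|x-y\|$, as in Eq.~\eqref{bal_loss}. I work on a Polish metric space (in our application $\mathcal{Z}$ with the Euclidean norm) and assume $P_1,P_2$ have finite first moments, so both sides are finite. On the unit hypersphere produced by our $L_2$ normalization, the space is compact and this assumption is automatic.

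\emph{Easy direction.} Fix any 1-Lipschitz $g$ and any coupling $\pi\in\Pi(P_1,P_2)$. Since the marginals of $\pi$ are $P_1$ and $P_2$,
\begin{equation*}
\Bigl|\int g\,dP_1-\int g\,dP_2\Bigr| = \Bigl|\int \bigl(g(x)-g(y)\bigr)\,d\pi(x,y)\Bigr| \le \int \|x-y\|\,d\pi(x,y).
\end{equation*}
Taking the infimum over $\pi$ and then the supremum over $g$ gives $\mathrm{IPM}_G\le\mathcal{W}$. The absolute value in Definition~\ref{dfipm} causes no trouble, because $G$ is closed under $g\mapsto -g$.

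\emph{Hard direction.} I would first invoke the general Kantorovich duality for a lower semicontinuous cost:
\begin{equation*}
\mathcal{W}(P_1,P_2)=\sup\Bigl\{\int \varphi\,dP_1+\int\psi\,dP_2 : \varphi(x)+\psi(y)\le \|x-y\|\Bigr\}.
\end{equation*}
This is the main obstacle. I would cite it from Villani (Theorem~5.10) rather than reprove it. If a self-contained argument were needed, I would prove it on compact $\mathcal{Z}$ by Fenchel--Rockafellar duality, or by a minimax exchange over the weakly compact set of couplings, and then extend to the general case by tightness and truncation.

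Given the duality, I reduce the admissible pairs to the form $(g,-g)$ with $g$ 1-Lipschitz. For an admissible pair $(\varphi,\psi)$, replace $\varphi$ by $\varphi^{c}(x)=\inf_y\bigl(\|x-y\|-\psi(y)\bigr)\ge\varphi(x)$. Then replace $\psi$ by $\varphi^{cc}(y)=\inf_x\bigl(\|x-y\|-\varphi^{c}(x)\bigr)\ge\psi(y)$. Each replacement keeps the pair admissible and does not decrease the objective. An infimum of 1-Lipschitz functions is 1-Lipschitz, so $g:=\varphi^{c}$ is 1-Lipschitz. For 1-Lipschitz $g$ with metric cost one checks $g^{c}=-g$: taking $x=y$ gives $g^c(y)\le -g(y)$, and the Lipschitz bound gives $\|x-y\|-g(x)\ge -g(y)$. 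Hence the objective is at most $\int g\,dP_1-\int g\,dP_2\le\mathrm{IPM}_G(P_1,P_2)$. Taking the supremum over admissible pairs yields $\mathcal{W}\le\mathrm{IPM}_G$.

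Some care is needed to ensure these transforms are measurable and integrable. This holds because they are Lipschitz and the first moments are finite. With it, the two directions combine to give the claimed equality.
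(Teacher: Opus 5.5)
Your proposal is correct. The paper gives no proof of this lemma and simply attributes the identity to Villani, while your derivation (the coupling bound for $\mathrm{IPM}_G\le\mathcal{W}$, then Villani's Theorem~5.10 plus the $c$-transform reduction with $g^c=-g$ for the reverse inequality) is the standard route to the Kantorovich--Rubinstein formula, so the two agree in substance and yours supplies the omitted detail. Your added finite-first-moment hypothesis is worthwhile, since the paper's statement omits the integrability condition needed for $\mathrm{IPM}_G$ to be well defined, although it holds automatically on the compact unit hypersphere where the lemma is applied.
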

\begin{theorem}
	Let $\Phi: \mathcal{X} \rightarrow \mathcal{Z}$ be the representation network. Then, the expected distributional error over the population satisfies:
	\begin{equation}
		\epsilon_{D}(\theta, \phi) \leq \epsilon_{F}^{a=0} + \epsilon_{F}^{a=1} + B_{\Phi} \cdot \mathcal{W}\!\left(p_{\Phi}^{a=1}, p_{\Phi}^{a=0}\right),
	\end{equation}
	where $\mathcal{W}(\cdot,\cdot)$ denotes the Wasserstein distance between distributions.
\end{theorem}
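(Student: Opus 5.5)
The plan is to follow the Shalit et al. style argument. First decompose the population error $\epsilon_D$ into factual and counterfactual pieces, then control the counterfactual piece with Lemma~\ref{cfdb} and convert the IPM into a Wasserstein distance with Lemma~\ref{trans}.

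\textbf{Step 1 (decomposition).} Write $p(x)=u\,p(x|a=1)+(1-u)\,p(x|a=0)$ with $u=p(a=1)$. Substituting into the definition of $\epsilon_D$ and collecting terms gives
\begin{align*}
\epsilon_D &= u\,\epsilon_F^{a=1} + (1-u)\,\epsilon_{CF}^{a=1} + (1-u)\,\epsilon_F^{a=0} + u\,\epsilon_{CF}^{a=0} \\
&= \epsilon_F(\theta,\phi) + \epsilon_{CF}(\theta,\phi),
\end{align*}
where the last equality uses Lemma~\ref{decom}.

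\textbf{Step 2 (counterfactual bound).} Apply Lemma~\ref{cfdb} with $G$ taken to be the 1-Lipschitz functions on $\mathcal{Z}$. We assume, as in the paper's standing assumption, that $\Phi$ is invertible with inverse $\Psi$, and that $d_{\theta,\phi}(\Psi(z),a)/B_\Phi\in G$ for both $a$. This gives
\[
\epsilon_{CF}\le (1-u)\epsilon_F^{a=1}+u\,\epsilon_F^{a=0}+B_\Phi\,\mathrm{IPM}_G(p_\Phi^{a=1},p_\Phi^{a=0}).
\]
Lemma~\ref{trans} (Kantorovich--Rubinstein duality) then identifies this IPM with $\mathcal{W}(p_\Phi^{a=1},p_\Phi^{a=0})$.

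\textbf{Step 3 (combine).} Adding the two bounds, the factual coefficients become
\[
u\,\epsilon_F^{a=1}+(1-u)\epsilon_F^{a=1}=\epsilon_F^{a=1},\qquad (1-u)\epsilon_F^{a=0}+u\,\epsilon_F^{a=0}=\epsilon_F^{a=0}.
\]
This is exactly the claimed inequality, with $B_\Phi$ the Lipschitz constant from Step 2.

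\textbf{Main obstacle.} The substantive step is justifying the hypothesis of Lemma~\ref{cfdb}: that $z\mapsto d_{\theta,\phi}(\Psi(z),a)$ is $B_\Phi$-Lipschitz. I would first try to get this from a chain of Lipschitz bounds:
\begin{itemize}
\item $x\mapsto p(Y|x,a)$ is Lipschitz in $\mathcal{W}_1$;
\item $z\mapsto \hat p_\theta(Y|z,a)$ is Lipschitz in $\mathcal{W}_1$, which follows from Grönwall's inequality if $v_\theta$ is Lipschitz in $z$;
\item $\Psi$ is Lipschitz on the image of $\Phi$, which is the unit hypersphere because of the $L_2$ normalization;
\item the triangle inequality for the metric $\mathcal{D}$ combines these.
\end{itemize}
Failing that, I would state the Lipschitz property as an assumption, as Lemma~\ref{cfdb} already does, and let it define $B_\Phi$.
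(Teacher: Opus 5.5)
Your proof is correct and follows the paper's argument essentially verbatim. You split $\epsilon_D$ via $p(x)=u\,p(x|a=1)+(1-u)\,p(x|a=0)$ into $\epsilon_F+\epsilon_{CF}$ (the paper writes this step as an inequality, but it is the equality you state), bound $\epsilon_{CF}$ with Lemma~\ref{cfdb} using the 1-Lipschitz class, and convert the IPM to $\mathcal{W}$ with Lemma~\ref{trans}. The paper never derives the $B_\Phi$-Lipschitz property of $z\mapsto d_{\theta,\phi}(\Psi(z),a)$; it simply assumes it, together with invertibility of $\Phi$, so your fallback is exactly the paper's route. Your Gr\"onwall sketch is an optional strengthening, and it would also need $v_\theta$ to be Lipschitz in the state $y$, not just in $z$.
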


\begin{proof}
	First, we expand the definition of $\epsilon_{D}$ using the law of total probability $p(x) = p(x, a=0) + p(x, a=1)$:
	\begin{align}
		\epsilon_{D} &= \int_{\mathcal{X}} \left( d_{\theta, \phi}(x, 1) + d_{\theta, \phi}(x, 0) \right) p(x) dx \notag \\
		&= \underbrace{\int_{\mathcal{X}} d_{\theta, \phi}(x, 1) p(x) dx}_{\text{I}} + \underbrace{\int_{\mathcal{X}} d_{\theta, \phi}(x, 0) p(x) dx}_{\text{II}}.
	\end{align}
	
	For term \text{I}, we decompose the population distribution $p(x)$ into factual ($a=1$) and counterfactual ($a=0$) components:
	\begin{align}
		\text{I} &= \int_{\mathcal{X}} d_{\theta, \phi}(x, 1) p(x, a=1) dx + \int_{\mathcal{X}} d_{\theta, \phi}(x, 1) p(x, a=0) dx \notag \\
		&= u \cdot \epsilon_{F}^{a=1} + (1-u) \cdot \epsilon_{CF}^{a=1}.
	\end{align}
%	Applying the Lemma \ref{cfdb} to the counterfactual part $(1-u) \cdot \epsilon_{CF}^{a=1}$:
%	\begin{equation}
%		(1-u) \epsilon_{CF}^{a=1} \leq (1-u) \left[ \epsilon_{F}^{a=1} + B_{\Phi} \mathrm{IPM}_G(p_{\Phi}^{t=1}, p_{\Phi}^{t=0}) \right]. \nonumber
%	\end{equation}
%	Substituting this back into Term I:
%	\begin{equation}
%		\text{I} \leq \epsilon_{F}^{a=1} + (1-u) B_{\Phi} \mathrm{IPM}_G(p_{\Phi}^{t=1}, p_{\Phi}^{t=0})
%	\end{equation}
	
	Similarly, for term \text{II}, we decompose $p(x)$ relative to $a=0$:
	\begin{align}
		\text{II} &= \int_{\mathcal{X}} d_{\theta, \phi}(x, 0) p(x, a=0) dx + \int_{\mathcal{X}} d_{\theta, \phi}(x, 0) p(x, a=1) dx \notag \\
		&= (1-u) \cdot \epsilon_{F}^{a=0} + u \cdot \epsilon_{CF}^{a=0}.
	\end{align}
%	Applying the Lemma\ref{cfdb} to the counterfactual part $u \cdot \epsilon_{CF}^{a=0}$:
%	\begin{equation}
%		\text{II} \leq \epsilon_{F}^{a=0} + u B_{\Phi} \mathrm{IPM}_G(p_{\Phi}^{t=1}, p_{\Phi}^{t=0})
%	\end{equation}
	
	Summing Term I and Term II, and applying the Lemmas  \ref{cfdb} and \ref{trans}:
	\begin{equation}
		\begin{aligned}
		\epsilon_{D} &= u \cdot \epsilon_{F}^{a=1} + (1-u) \cdot \epsilon_{CF}^{a=1} + (1-u) \cdot \epsilon_{F}^{a=0} + u \cdot \epsilon_{CF}^{a=0}\\
		&\leq u \cdot \epsilon_{F}^{a=1}+(1-u) \cdot \epsilon_{F}^{a=0}+\epsilon_{CF} \\
		&\leq \epsilon_{F}^{a=0} + \epsilon_{F}^{a=1} + B_{\Phi} \cdot \mathcal{W}(p_{\Phi}^{a=1}, p_{\Phi}^{a=0})
	\end{aligned}
	\end{equation}
	This concludes the proof.
\end{proof}

\section{Data Generation Settings}\label{data_setting}

To further evaluate the performance of our method in distribution estimation, we conduct the following synthetic data generation settings with high-dimensional covariates $X \in \mathbb{R}^d$ ($d=100$). The observed outcome is defined as $Y = AY^{(1)} + (1 - A)Y^{(0)}$. Both settings share a common structure for potential outcomes:
\begin{equation}
	\begin{cases} 
		Y^{(0)} = f(X) + \epsilon_0,\\
		Y^{(1)} = f(X) + \tau(X) + \epsilon_1, \nonumber
	\end{cases}
\end{equation}
where $\epsilon_0, \epsilon_1 \sim \mathcal{N}(0, 1)$. The baseline function $f(X)$ and treatment effect $\tau(X)$ are defined as:
\begin{align}
	f(X) &= 0.5 X_1^2 + 0.5 \exp\left(\frac{X_2 + X_3}{4}\right) + \sin(X_4 + X_5), \nonumber \\
	\tau(X) &= 1.0 + 0.5 X_1 - 0.5 X_2^2 + \sin(X_3). \nonumber
\end{align}

\subsection{Setting A: Propensity-based Selection}
In this setting, $X \sim \mathcal{N}(\mathbf{0}, \mathbf{I}_d)$. Selection bias is introduced via a logistic propensity score $\pi(X) = P(A=1 \mid X)$:
\begin{equation}
	\text{logit}[\pi(X)] = 0.5 + 0.5 X_1 - X_2^2 + \sin(X_3), \nonumber
\end{equation}
where $A \sim \text{Bernoulli}[\pi(X)]$.

\subsection{Setting B: Covariate Shift}
In this setting, the treatment assignment is balanced ($P(A=1) = 0.5$), but the covariate distributions for the two groups are forced to differ to simulate strong distribution shift:
\begin{equation}
	X \mid A=a \sim 
	\begin{cases} 
		\mathcal{N}(\mathbf{0}, \mathbf{I}_d), & a=0 \\
		\mathcal{N}(s \cdot \mathbf{1}_d, \mathbf{I}_d), & a=1, \nonumber
	\end{cases}
\end{equation}
where the shift parameter is set to $s = 0.5$. This setting tests the model's ability to generalize across treatment groups with limited overlap.

\section{Implementation details}\label{imp}
We implement RepFlow in PyTorch. Experiments were carried out on 4 × NVIDIA A100.The representation network comprises an initial projection layer, two residual blocks with ReLU activation, and a final linear layer that outputs $\ell_2$-normalized latent representations. The velocity network comprises an outcome embedding module, a time embedding layer, and a FiLM layer for feature conditioning. This is followed by two gated residual blocks with SiLU activation that adaptively incorporate conditioning variables, a skip-connection aggregation mechanism that sums intermediate features across all blocks to form the final hidden state, and a dual-head projection layer that predicts treatment-specific velocity fields $v_0$ and $v_1$. For model training, we adopt the Adam optimizer \cite{KingmaB14}. To ensure an equitable comparison, all methods use the same 70/20/10 train/validation/test split of the dataset, with the validation set used to select the optimal hyperparameters. 

%%%%%%%%%%%%%%%%%%%%%%%%%%%%%%%%%%%%%%%%%%%%%%%%%%%%%%%%%%%%%%%%%%%%%%%%%%%%%%%
%%%%%%%%%%%%%%%%%%%%%%%%%%%%%%%%%%%%%%%%%%%%%%%%%%%%%%%%%%%%%%%%%%%%%%%%%%%%%%%

\end{document}